\documentclass[preprint,11pt]{elsarticle}

\biboptions{numbers,sort&compress}

\usepackage{amsmath,amsfonts,amssymb,amsthm}
\usepackage{mathtools}
\mathtoolsset{showonlyrefs}
\usepackage{graphicx}
\usepackage{xcolor}
\definecolor{addedblue}{RGB}{0,70,170}
\definecolor{correctionred}{RGB}{190,0,0}

\newcommand{\corr}[1]{\textcolor{correctionred}{#1}}

\usepackage{caption,placeins}
\usepackage{subcaption}
\usepackage{hyperref}
\hypersetup{colorlinks=true,linkcolor=blue,citecolor=blue,urlcolor=blue}
\usepackage[capitalise,nameinlink]{cleveref}

\newcommand{\sym}{\operatorname{sym}}
\renewcommand{\skew}{\operatorname{skew}}

\newtheorem{lemma}{Lemma}
\newtheorem{remark}{Remark}
\newtheorem{theorem}{Theorem}
\newtheorem{corollary}{Corollary}

\journal{\corr{Journal of Mathematical Analysis and Applications}}

\begin{document}

\begin{frontmatter}

\title{Reliable Error Estimation for PINNs: Lower and Upper A Posteriori Bounds}

\author[aff1,aff2]{Ismail Huseynov}
\author[aff3]{Arzu Ahmadova}
\author[aff4]{Agamirza Bashirov}

\affiliation[aff1]{organization={Physikalisch-Technische Bundesanstalt (PTB)}, country={Germany}}
\affiliation[aff2]{organization={Technical University of Berlin}, country={Germany}}
\affiliation[aff3]{organization={Weierstrass Institute for Applied Analysis and Stochastics}, city={Berlin}, country={Germany}}
\affiliation[aff4]{organization={Eastern Mediterranean University}, country={Turkey}}

\begin{abstract}
Physics-informed neural networks (PINNs) combine machine learning with physical laws to solve differential equations. While existing results provide rigorous \emph{a posteriori} upper bounds for PINN prediction errors, complete certification also requires complementary lower information in order to obtain computable two-sided error enclosures. In this paper, we derive computable \emph{a posteriori} lower bounds for PINN errors in ordinary differential equations on suitable certified state-space domains under a localized strong monotonicity condition. We combine these estimates with complementary localized upper bounds under a one-sided Lipschitz condition, which is weaker than the global Lipschitz assumption used in previous work and can yield sharper upper error bands. The resulting bounds depend only on the neural-network approximation, the ODE residual, and local monotonicity and growth constants, and therefore do not require access to the exact solution. For linear time-invariant and time-varying systems, we further derive explicit formulas in terms of the minimal and maximal eigenvalues of the symmetric part of the system matrix. We also discuss the distinction between soft and hard enforcement of initial conditions in PINNs and explain why exact enforcement can make the scalar lower certificate uninformative. To recover nontrivial lower information in the linear setting, we use a signed-residual finite-probe certificate based on coordinate unit vectors. We also formulate a certificate-informed training strategy in which the propagated upper certificate is used as an auxiliary regularizer, while lower certificates remain post-training diagnostics. Altogether, the proposed framework provides rigorous and practically computable error certificates for PINN approximations of ODEs, while making explicit the domains and model classes for which the assumptions can be verified.
\end{abstract}

\begin{keyword}
physics-informed neural networks \sep a posteriori error estimation \sep lower error bounds \sep one-sided Lipschitz continuity \sep strong monotonicity \sep Runge-Kutta methods
\end{keyword}

\end{frontmatter}

\section{Introduction}

The certification of machine learning methods has emerged as a critical challenge in scientific computing, particularly for physics-informed neural networks (PINNs) where reliability guarantees are essential for deployment in safety-critical applications. Significant progress has been made in deriving a posteriori upper bounds for PINN errors \cite{hillebrecht2022certified}. While upper bounds estimate the proximity of the exact and approximate solutions from above, lower bounds provide complementary information on the minimal separation between them. Complete certification therefore requires both upper and lower error enclosures to provide tight bounds on prediction quality. However, the complementary problem of establishing rigorous lower bounds remains largely unexplored.

\textbf{Literature review.}
Neural network methods for solving ordinary and partial differential equations (ODEs and PDEs), especially PINNs, have become very popular in recent years, as discussed in \cite{cuomo2022sciml}. PINNs incorporate physical constraints directly into the learning objective through differential operators, enabling them to leverage neural network expressivity while respecting physical consistency. This hybrid approach facilitates accurate predictions even with limited or noisy data by constructionally enforcing physical feasibility \cite{raissi2019physics}.

Certified error estimation for PINNs builds upon foundational developments in scientific machine learning \cite{raissi2019physics,karniadakis2021physicsinformed}. Existing approaches cover statistical uncertainty quantification and a deep confidence framework \cite{cortes2019deep}, dual-weighted residual methods \cite{minakowski2021error}, and goal-oriented error estimation \cite{vandermeer2022goal}, though these typically provide either statistical rather than rigorous bounds or require substantial training data.

Statistical error models quantify uncertainty only in a statistical sense: they rely on sufficiently rich and representative training data to construct confidence intervals for the prediction error and therefore do not, in general, provide deterministic rigorous certificates for a given trained PINN. In contrast, certified \emph{a posteriori} error estimation aims at computable worst-case bounds for the prediction error of a specific network realization, without requiring knowledge of the exact solution and, in the sense of \cite{hillebrecht2022certified}, even for inputs not used during training. In physics-informed problems, we often lack a large labeled dataset of true errors, so direct application of deep confidence is limited. Drawing from certification frameworks in reduced-order modeling \cite{hesthaven2016certified} and classical stability theory for ODEs \cite{hairer2008solving}, this work establishes rigorous \emph{a posteriori error bounds} that remain valid for unseen data without requiring \emph{a priori knowledge of the true solution}. Theoretical analyses by \cite{deryck2022genericbound} further advance PINN reliability by deriving rigorous, dimension-robust \emph{a priori approximation error bounds} for both solution and operator-learning networks, while providing constructive and efficient guidelines that explicitly relate network architecture to achievable accuracy.

While the certification of PINNs for ODEs has received increasing attention, extending such guarantees to PDEs poses substantially greater theoretical and computational challenges. In the PDE setting, certification requires accounting for spatial derivatives, boundary and initial conditions, and stability properties of the underlying operator. Recent studies have begun to address these issues by deriving rigorous a posteriori error bounds and verification frameworks for PDE-defined PINNs; see \cite{haugen2025trustworthy,hillebrecht2025rigorous} and references therein. Hillebrecht and Unger \cite{hillebrecht2025rigorous,hillebrecht2025stokes} established one of the first rigorous certification schemes for PINNs governed by classical PDEs such as the heat, transport, and Navier--Stokes equations. Their approach derives provable upper bounds on the prediction error using PDE stability estimates and residual norms, enabling verification without access to the exact solution. In parallel, \cite{eiras2024efficient} introduced an efficient certification method based on bounding the global residual, providing computable correctness conditions analogous to solver tolerances. Furthermore, \cite{fanaskov2024neuralfunctional} proposed a functional-type a posteriori framework that integrates classical majorant estimators into PINN training, yielding guaranteed error upper bounds with high computational efficiency. Guo and Haghighat \cite{guo2020elasticity} extended these ideas to elasticity problems, employing the constitutive relation error to obtain both global and goal-oriented error estimates. However, establishing corresponding \textit{a posteriori lower bounds}, which would provide guaranteed measures of the minimal attainable error, remains largely unexplored in the current literature.

\noindent\textbf{Practical applicability and scope.}
The theory developed below is designed for ODE models and operating regimes in which the vector field satisfies suitable local one-sided growth conditions on a certified state-space domain. Representative examples include early-growth epidemic models and their multigroup linearizations \cite{chowell2016growthreview,diekmann2010ngm,hoang2025infectious}, low-incidence SIS-type reductions, and autocatalytic or other positive-feedback chemical kinetics in batch and flow reactors \cite{schuster2019autocatalysis,hanopolskyi2020autocatalysis,kriukov2024autocatalytic}. From a practical viewpoint, the relevant issue is therefore local rather than global: one verifies the required sign conditions for the symmetric part of the Jacobian on a certified domain containing both the exact and approximate trajectories. This motivates the localized framework adopted in the present paper.

\noindent\textbf{Initial conditions and informativeness of lower bounds.}
The informativeness of a lower certificate depends crucially on how initial conditions are imposed in the PINN. In standard PINNs, the initial condition is typically enforced \emph{softly} through a data-misfit term in the loss, rather than exactly by construction \cite{raissi2019physics}. \emph{Hard-constraint} formulations instead embed the condition into the network architecture or trial ansatz so that it is satisfied exactly by construction \cite{lu2021hardconstraints,hao2024stability}. In the latter case, the scalar lower bound can become trivial after \emph{nonnegativity} post-processing because the anchoring error vanishes. We therefore do not claim that the scalar estimator alone can certify a positive lower error in this regime. Instead, when the model is linear, we use the signed residual itself through a finite set of coordinate probes which gives a full-interval lower diagnostic. This distinction is practically relevant because recent work on stiff time-dependent PINNs shows that exact hard enforcement of initial and boundary conditions can materially affect robustness and training efficiency \cite{hao2024stability}.

\noindent\textbf{Organization and contributions.}
Accordingly, the main contributions of this manuscript are:
\begin{itemize}
    \item computable localized \emph{a posteriori} lower bounds for PINN errors on certified state-space domains under \textit{a strong monotonicity condition},
    \item complementary localized \emph{a posteriori} upper bounds under \textit{a one-sided Lipschitz condition}, extending earlier certification results based on Lipschitz assumptions and yielding potentially sharper upper certificates on certified domains \cite{hillebrecht2022certified,hillebrecht2025rigorous},
    \item explicit formulas for \textit{linear time-invariant and time-varying systems} in terms of the extremal eigenvalues of the symmetric part of the system matrix,
    \item \textit{a certified-containment criterion} that makes precise when the local assumptions remain valid and when certification must terminate,
    \item a signed-residual finite-probe lower certificate for linear inhomogeneous systems, implemented with coordinate unit vectors and remaining informative when the scalar lower bound vanishes,
    \item a certificate-informed training formulation based on the propagated upper certificate and its auxiliary scalar ODE representation.
\end{itemize}

These two-sided certified error bands provide mathematically rigorous enclosures of the PINN approximation error. A narrow band indicates tight certification, whereas a wide band reflects either conservative local constants or limited physical fidelity of the trained network. The band width therefore serves as a quantitative indicator of certificate sharpness and practical reliability.

The paper is organized as follows. Section~\ref{Sec 2} introduces the problem formulation, the localized certification setting on state-space domains, and the flow-map notation used throughout the paper. Section~\ref{Sec 3} presents the main \emph{a posteriori} error estimates, including localized lower and upper bounds for nonlinear systems, sharpened formulas for linear time-invariant and time-varying systems, and the RK4-based numerical post-processing used to evaluate the certification terms. Section~\ref{Sec 4} explains the connection with physics-informed neural-network training. Section~\ref{Sec 5} develops the certificate-informed training formulation. Section~\ref{Sec 6} reports numerical examples illustrating the theory. Section~\ref{Sec 7} summarizes the main conclusions and limitations.

\textbf{Notation.} We conclude this introductory section by introducing standard mathematical notation used throughout the paper. The time interval is denoted by $\mathbb{T}=[0,T]$ with $T>0$. For vectors in $\mathbb{R}^n$, $\|\cdot\|$ denotes the Euclidean norm and $\cdot$ the standard inner product. For a matrix $A$, $A^\top$ denotes its transpose. For a finite set $A$, we write $|A|$ for its cardinality. Unless stated otherwise, all norms in the theoretical developments are understood in the Euclidean sense, although several arguments extend to other equivalent norms on finite-dimensional spaces.

\section{Problem Description}\label{Sec 2}

We consider the initial value problem on the time interval \(\mathbb{T}=[0,T]\):
\begin{equation}\label{eq:ode}
\begin{cases}
\dot{x}(t)=f(t,x(t)), \quad t\in\mathbb{T},\\
x(0)=x_0,
\end{cases}
\end{equation}
where \(f:\mathbb{T}\times\mathbb{R}^n\to\mathbb{R}^n\) is continuous and
\(x_0\in\mathbb{R}^n\).

A standard sufficient condition for local existence and uniqueness of solutions to
\eqref{eq:ode} is continuity in \(t\) together with local Lipschitz continuity of \(f\)
with respect to \(x\); see, for example, the Picard--Lindel\"of theorem \cite{arnold1978}.
In the present paper, however, we work on a \emph{certified} convex state-space domain
\(D\subset\mathbb{R}^n\). All structural assumptions are therefore imposed only on
\(\mathbb{T}\times D\), and the resulting certificates are valid on every time interval on
which both the exact trajectory and the PINN approximation remain in \(D\).

For clarity, we first formulate the assumptions with \emph{constant} certified
coefficients on \(D\). Thus, for all \(u,v\in D\) and all \(t\in\mathbb{T}\), we assume
\begin{align}
(f(t,u)-f(t,v))\cdot(u-v) &\ge \ell_D \|u-v\|^2,
\label{eq:local-lower-osl}\\
(f(t,u)-f(t,v))\cdot(u-v) &\le \mu_D \|u-v\|^2,
\label{eq:local-upper-osl}\\
\|f(t,u)-f(t,v)\| &\le L_D \|u-v\|.
\label{eq:local-lip}
\end{align}
Here,
\begin{itemize}
    \item \(\ell_D\in\mathbb{R}\) is a \emph{local lower one-sided Lipschitz coefficient},
    \item \(\mu_D\in\mathbb{R}\) is a \emph{local upper one-sided Lipschitz constant},
    \item \(L_D>0\) is a \emph{local Lipschitz constant}.
\end{itemize}

The terminology is important: the inequality \eqref{eq:local-lower-osl} is a lower
one-sided Lipschitz estimate in general, and it becomes a \emph{strong monotonicity}
condition precisely in the case \(\ell_D>0\). This is the regime in which the lower
\emph{a posteriori} error bound is informative. By contrast, \eqref{eq:local-upper-osl}
is the relevant assumption for upper error bounds; if \(\mu_D<0\), then the dynamics are
contractive on \(D\).

The role of the three coefficients is therefore different. The local Lipschitz bound
\eqref{eq:local-lip}, together with continuity in \(t\), provides the standard well-posedness assumption on \(D\). The lower one-sided bound \eqref{eq:local-lower-osl} is used for lower error certificates, whereas the upper one-sided bound \eqref{eq:local-upper-osl} is used for upper error certificates. Since \eqref{eq:local-lip} implies \eqref{eq:local-upper-osl} with the same constant, the upper one-sided Lipschitz assumption is weaker than the Lipschitz assumption used in earlier upper-bound results such as \cite{hillebrecht2022certified,hillebrecht2025rigorous}.

\begin{remark}[Signs and admissible constants]
\label{rem:signs-admissibility}
The one-sided coefficients \(\ell_D\) and \(\mu_D\) are real numbers; they are not required to be positive. Positivity of \(\ell_D\) is needed only when one wants to call the lower estimate a strong-monotonicity estimate. Similarly, \(\mu_D<0\) is allowed and means that the dynamics are contractive on the certified domain. By contrast, a Lipschitz constant satisfies \(L_D\ge0\). In the linear autonomous case, every \(\ell\le m(A)\) is an admissible lower one-sided coefficient, every \(\mu\ge M(A)\) is an admissible upper one-sided coefficient, and every \(L\ge\|A\|_2\) is an admissible Euclidean Lipschitz constant.
\end{remark}

If \(f\) is continuously differentiable with respect to \(x\), then these certified
coefficients can be chosen from the symmetric part of the Jacobian. Writing
\begin{equation}
    \sym(J_x f)(t,\xi):=\frac{J_x f(t,\xi)+J_x f(t,\xi)^\top}{2},
\end{equation}
one may define
\begin{align}
\ell_D &:= \inf_{(t,\xi)\in \mathbb{T}\times D}
\lambda_{\min}\!\bigl(\sym(J_x f)(t,\xi)\bigr),\\
\mu_D &:= \sup_{(t,\xi)\in \mathbb{T}\times D}
\lambda_{\max}\!\bigl(\sym(J_x f)(t,\xi)\bigr),\\
L_D &:= \sup_{(t,\xi)\in \mathbb{T}\times D}\|J_x f(t,\xi)\|_2.
\end{align}
In the scalar case \(n=1\), this reduces to
\begin{align}
\ell_D &= \inf_{(t,\xi)\in \mathbb{T}\times D}\partial_x f(t,\xi),\\
\mu_D &= \sup_{(t,\xi)\in \mathbb{T}\times D}\partial_x f(t,\xi),\\
L_D &= \sup_{(t,\xi)\in \mathbb{T}\times D}|\partial_x f(t,\xi)|.
\end{align}

For each initial value \(\xi\in D\) such that the corresponding solution exists on
\(\mathbb{T}\) and remains in \(D\), we denote that solution by \(t\mapsto\varphi(t,\xi)\).
This defines the associated local flow map
\begin{equation}\label{eq:flow-map}
\varphi:\mathbb{T}\times D \to \mathbb{R}^n,
\qquad
(t,\xi)\mapsto \varphi(t,\xi),
\end{equation}
on the subset of \(\mathbb{T}\times D\) where the trajectory stays in \(D\).
In particular, for the prescribed initial value \(x_0\), the exact trajectory is
\begin{equation}\label{optimal}
x(t)=\varphi(t,x_0),\quad t\in\mathbb{T}.
\end{equation}
In general, the flow map is not available in closed form and must therefore be approximated numerically. In this work, we use a physics-informed neural network. More precisely, we introduce a parameterized candidate map
\begin{equation}\label{eq:ml-candidate}
\hat{\varphi}_\theta:\mathbb{T}\times D\to\mathbb{R}^n,
\qquad
(t,\xi)\mapsto \hat{\varphi}_\theta(t,\xi),
\end{equation}
where \(\theta\in\mathbb{R}^k\) denotes the trainable parameter vector of the network. The architecture determines the approximation class, while \(\theta\) collects the weights and biases. Throughout the paper, we assume that \(\hat{\varphi}_\theta\) is sufficiently smooth with respect to \(t\) so that the differential-equation residual is well defined.

For the prescribed initial value \(x_0\), the corresponding PINN trajectory is the slice
\begin{equation}
\hat{x}_\theta(t):=\hat{\varphi}_\theta(t,x_0), \quad t\in\mathbb{T}.
\end{equation}
The training procedure returns a parameter vector, denoted by \(\theta^\star\), and we write
\begin{equation}
\hat{x}(t):=\hat{x}_{\theta^\star}(t)=\hat{\varphi}_{\theta^\star}(t,x_0).
\end{equation}
Here, \(\theta^\star\) simply denotes the trained parameter vector produced by the chosen optimization procedure.

The approximation error is then
\begin{equation}\label{eq:error}
e(t):=x(t)-\hat{x}(t)
=\varphi(t,x_0)-\hat{\varphi}_{\theta^\star}(t,x_0),
\quad t\in\mathbb{T}.
\end{equation}

Our goal is to estimate this error \emph{a posteriori}, using only computable quantities derived from the trained PINN. More precisely, the certificates below depend on
\begin{itemize}
    \item the trained trajectory \(\hat{x}\),
    \item a computable majorant of the residual,
    \item certified coefficients \(\ell_D\), \(\mu_D\), and \(L_D\) on the domain \(D\),
    \item and the initial mismatch \(\|e(0)\|=\|x_0-\hat{x}(0)\|\).
\end{itemize}
In \textit{soft-constrained} PINNs, this initial mismatch is controlled by the initial-condition part of the training loss and is directly computable from the trained network. In \textit{hard-constrained} PINNs, one has \(\hat{x}(0)=x_0\) by construction and therefore \(e(0)=0\).

\medskip
\noindent\textbf{Main Problem.}
Given a trained PINN approximation \(\hat{\varphi}_{\theta^\star}\), or equivalently the associated trajectory \(\hat{x}(t)=\hat{\varphi}_{\theta^\star}(t,x_0)\), derive computable and rigorous \emph{a posteriori} certificates for the error \eqref{eq:error} on \(\mathbb{T}\), in particular lower and upper bounds that can be evaluated at low computational cost without using the exact solution.

\begin{remark}[Time-dependent coefficients]
One can likewise consider time-dependent local coefficient functions \(\ell_D,\mu_D,L_D\in C(\mathbb{T})\). In that case, the constant exponential factors are replaced by the corresponding integrating factors
\begin{align}
\Lambda_D(t) &:=\int_0^t \ell_D(s)\,ds,\\
\mathcal M_D(t) &:=\int_0^t \mu_D(s)\,ds,\\
\mathcal L_D(t) &:=\int_0^t L_D(s)\,ds.
\end{align}
Accordingly, expressions of the form \(e^{\ell_D t}\), \(e^{\mu_D t}\), and \(e^{L_D t}\) are replaced by \(e^{\Lambda_D(t)}\), \(e^{\mathcal M_D(t)}\), and \(e^{\mathcal L_D(t)}\), respectively.
\end{remark}

\section{A Posteriori Error Estimation: Localized Lower and Upper Bounds}
\label{Sec 3}

In this section, we derive computable a posteriori error bounds for a trained PINN approximation. We treat the problem in two stages. First, we establish localized lower and upper error bounds for nonlinear systems on a certified state-space domain under one-sided growth assumptions. Second, we specialize these results to linear systems, where the relevant constants admit explicit spectral representations in terms of the symmetric part of the system matrix. Finally, we record the RK4-based numerical post-processing used to evaluate the certification integrals.

\subsection{Nonlinear case: localized lower and upper bounds}
\label{sec:nonlinear-localized-bounds}

Assume that the neural network has already been trained and that, for the prescribed initial value \(x_0\in\mathbb{R}^n\), we have obtained the PINN approximation \eqref{optimal}. Throughout this subsection, we work on a convex certified state-space domain \(D\subset\mathbb{R}^n\) and assume that both the exact solution \(x\) and the PINN approximation \(\hat{x}\) belong to \(C^1(\mathbb{T};D)\). Equivalently, the estimates below are valid on every time interval on which both trajectories remain in \(D\).

Since the candidate map $\hat{\varphi}_\theta$ is assumed to be sufficiently smooth, we define the residual by
\begin{equation}\label{eq:residual}
\mathcal{R}_{\hat{\varphi}}(t)
:=
\dot{\hat{x}}(t)-f(t,\hat{x}(t)),
\quad t\in\mathbb{T}.
\end{equation}
The time derivative can be computed efficiently by \emph{automatic differentiation} \cite{Rall1981}. Our aim in this section is to derive \textit{localized} \emph{a posteriori} lower and upper bounds for the prediction error
\begin{equation}\label{eq:error-section3}
e(t):=x(t)-\hat{x}(t),
\quad t\in\mathbb{T}.
\end{equation}

\begin{theorem}[Localized lower bound]
\label{thm:localized-lower}
Let $D\subset\mathbb{R}^n$ be convex, and let $x,\hat{x}\in C^1(\mathbb{T};D)$, where $x$ solves \eqref{eq:ode}. Assume that there exists a constant $\ell_D\in\mathbb{R}$ such that
\begin{equation}\label{eq:lower-osl-thm}
\bigl(f(t,u)-f(t,v)\bigr)\cdot(u-v)\ge \ell_D\|u-v\|^2
\end{equation}
for all $u,v\in D$ and all $t\in\mathbb{T}$. Let $\delta:\mathbb{T}\to\mathbb{R}_+$ be continuous and satisfy
\begin{equation}\label{eq:residual-upper}
\|\mathcal{R}_{\hat{\varphi}}(t)\|\le \delta(t),
\quad t\in\mathbb{T}.
\end{equation}
Then, for every $t\in\mathbb{T}$, the prediction error satisfies
\begin{equation}\label{eq:error-lower-localized}
\|e(t)\|
\ge
e^{\ell_D t}\|e(0)\|
-
\int_0^t e^{\ell_D(t-s)}\delta(s)\,ds.
\end{equation}
In particular, if $\ell_D>0$, then \eqref{eq:error-lower-localized} yields a localized strong-monotonicity lower bound.
\end{theorem}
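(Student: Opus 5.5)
We plan to derive a scalar differential inequality for the squared error norm and then integrate it with an integrating factor. Since $x$ solves \eqref{eq:ode} and $\hat x$ satisfies $\dot{\hat x}=f(t,\hat x)+\mathcal R_{\hat\varphi}$ by \eqref{eq:residual}, the error obeys
\begin{equation}
\dot e(t)=f(t,x(t))-f(t,\hat x(t))-\mathcal R_{\hat\varphi}(t).
\end{equation}
Both $x(t)$ and $\hat x(t)$ lie in $D$, so we may apply \eqref{eq:lower-osl-thm} with $u=x(t)$, $v=\hat x(t)$; the residual term is handled by Cauchy--Schwarz together with \eqref{eq:residual-upper}. This gives
\begin{equation}
\tfrac12\tfrac{d}{dt}\|e(t)\|^2=\dot e(t)\cdot e(t)\ge \ell_D\|e(t)\|^2-\delta(t)\|e(t)\|.
\end{equation}
Convexity of $D$ is not really needed for this step beyond ensuring that the hypothesis applies; it matters only when $\ell_D$ is computed from the Jacobian via the mean value theorem.

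The main obstacle is passing from $\|e\|^2$ to $\|e\|$, because $\|e\|$ need not be differentiable where $e$ vanishes. I would avoid this with the regularization $r_\varepsilon(t):=\sqrt{\|e(t)\|^2+\varepsilon^2}$ for $\varepsilon>0$, which is $C^1$. Dividing the inequality above by $r_\varepsilon$ and using $\|e\|\le r_\varepsilon$ gives
\begin{equation}
\dot r_\varepsilon \ge \ell_D\frac{\|e\|^2}{r_\varepsilon}-\delta.
\end{equation}
Next we use $\|e\|^2/r_\varepsilon=r_\varepsilon-\varepsilon^2/r_\varepsilon\ge r_\varepsilon-\varepsilon$, valid because $r_\varepsilon\ge\varepsilon$. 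When $\ell_D\ge0$ this yields
\begin{equation}
\dot r_\varepsilon\ge \ell_D r_\varepsilon-\ell_D\varepsilon-\delta.
\end{equation}
When $\ell_D<0$, the bound $\|e\|^2/r_\varepsilon\le r_\varepsilon$ directly gives $\dot r_\varepsilon\ge \ell_D r_\varepsilon-\delta$. Both cases are therefore covered by
\begin{equation}
\dot r_\varepsilon\ge \ell_D r_\varepsilon-|\ell_D|\varepsilon-\delta.
\end{equation}

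To integrate, multiply by $e^{-\ell_D s}$, so that $\tfrac{d}{ds}\bigl(e^{-\ell_D s}r_\varepsilon(s)\bigr)\ge -e^{-\ell_D s}\bigl(\delta(s)+|\ell_D|\varepsilon\bigr)$. Integrating from $0$ to $t$ and multiplying by $e^{\ell_D t}$ gives
\begin{equation}
r_\varepsilon(t)\ge e^{\ell_D t}r_\varepsilon(0)-\int_0^t e^{\ell_D(t-s)}\bigl(\delta(s)+|\ell_D|\varepsilon\bigr)\,ds.
\end{equation}
Finally let $\varepsilon\to0$. Then $r_\varepsilon(t)\to\|e(t)\|$ and $r_\varepsilon(0)\to\|e(0)\|$, and the $\varepsilon$-term in the integral is bounded by $|\ell_D|\varepsilon\, t\,e^{|\ell_D|t}\to0$. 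This yields \eqref{eq:error-lower-localized}.

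The closing remark of the theorem requires no separate argument. When $\ell_D>0$, the factor $e^{\ell_D t}$ amplifies the initial mismatch, so the bound is informative, which is the strong-monotonicity case.
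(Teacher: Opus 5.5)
Your proof is correct. Its skeleton matches the paper's: the error equation, the one-sided bound with $u=x(t)$ and $v=\hat x(t)$, Cauchy--Schwarz on the residual, and the integrating factor $e^{-\ell_D s}$. The genuine difference is how you deal with the non-differentiability of $\|e\|$ at zeros of $e$.

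The paper works with $\|e(t)\|$ directly. It uses absolute continuity of $\|e\|$ and computes $\frac{d}{dt}\|e\|=e\cdot\dot e/\|e\|$ at a.e.\ $t$ with $e(t)\neq0$. It then states $\frac{d}{dt}\|e\|\ge\ell_D\|e\|-\delta$ for a.e.\ $t\in\mathbb{T}$ and integrates. This leaves the zero set of $e$ unaddressed, even though it can have positive measure. The omission is harmless: at an interior zero where $\|e\|$ is differentiable, the derivative is $0\ge-\delta(t)$ because $\|e\|$ attains its minimum there. But the paper does not say this.

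Your regularization $r_\varepsilon=\sqrt{\|e\|^2+\varepsilon^2}$ avoids the issue entirely. Since $r_\varepsilon$ is $C^1$, the integrating-factor step is classical calculus with no a.e.\ arguments. The cost is only the $O(\varepsilon)$ bookkeeping and the limit $\varepsilon\to0$. Your case split on the sign of $\ell_D$ can also be merged into one line: $\ell_D\|e\|^2/r_\varepsilon=\ell_D r_\varepsilon-\ell_D\varepsilon^2/r_\varepsilon\ge\ell_D r_\varepsilon-|\ell_D|\varepsilon$ holds for either sign.

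Your remark that convexity of $D$ plays no role is consistent with the paper's proof, which never uses it.
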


\begin{proof}
Let
\begin{equation}\label{eq:app-error-def}
e(t):=x(t)-\hat{x}(t),
\quad t\in\mathbb{T}.
\end{equation}
Since \(x\) solves \eqref{eq:ode} and
\begin{equation}\label{eq:app-residual-def}
\mathcal{R}_{\hat{\varphi}}(t)
=
\dot{\hat{x}}(t)-f(t,\hat{x}(t)),
\end{equation}
the error satisfies
\begin{equation}\label{eq:error-dynamics-lower-app}
\dot e(t)
=
f(t,x(t))-f(t,\hat{x}(t))
-
\mathcal{R}_{\hat{\varphi}}(t).
\end{equation}
Because \(e\in C^1(\mathbb{T};\mathbb{R}^n)\), the map \(t\mapsto \|e(t)\|\) is absolutely continuous. Hence, for almost every \(t\in\mathbb{T}\) such that \(e(t)\neq 0\),
\begin{equation}\label{eq:app-norm-derivative-lower}
\frac{d}{dt}\|e(t)\|
=
\frac{e(t)\cdot \dot e(t)}{\|e(t)\|}.
\end{equation}
Combining \eqref{eq:error-dynamics-lower-app} and \eqref{eq:app-norm-derivative-lower}, we obtain
\begin{equation}\label{eq:app-lower-split}
\frac{d}{dt}\|e(t)\|
=
\frac{e(t)\cdot\bigl(f(t,x(t))-f(t,\hat{x}(t))\bigr)}{\|e(t)\|}
-
\frac{e(t)\cdot \mathcal{R}_{\hat{\varphi}}(t)}{\|e(t)\|}.
\end{equation}
Since \(x(t),\hat{x}(t)\in D\) and \eqref{eq:lower-osl-thm} holds,
\begin{equation}\label{eq:app-lower-monotonicity}
e(t)\cdot\bigl(f(t,x(t))-f(t,\hat{x}(t))\bigr)
\ge
\ell_D\|e(t)\|^2.
\end{equation}
Moreover, by Cauchy--Schwarz and \eqref{eq:residual-upper},
\begin{equation}\label{eq:app-lower-residual}
-e(t)\cdot \mathcal{R}_{\hat{\varphi}}(t)
\ge
-\|e(t)\|\,\|\mathcal{R}_{\hat{\varphi}}(t)\|
\ge
-\|e(t)\|\,\delta(t).
\end{equation}
Substituting \eqref{eq:app-lower-monotonicity} and \eqref{eq:app-lower-residual} into \eqref{eq:app-lower-split} yields
\begin{equation}\label{eq:app-lower-diff-ineq}
\frac{d}{dt}\|e(t)\|
\ge
\ell_D\|e(t)\|-\delta(t)
\quad \text{for a.e. } t\in\mathbb{T}.
\end{equation}
Multiplying \eqref{eq:app-lower-diff-ineq} by \(e^{-\ell_D t}\) gives
\begin{equation}\label{eq:app-lower-integrating-factor}
\frac{d}{dt}\Bigl(e^{-\ell_D t}\|e(t)\|\Bigr)
\ge
-e^{-\ell_D t}\delta(t)
\quad \text{for a.e. } t\in\mathbb{T}.
\end{equation}
Integrating from \(0\) to \(t\) yields
\begin{equation}\label{eq:app-lower-integrated}
e^{-\ell_D t}\|e(t)\|-\|e(0)\|
\ge
-\int_0^t e^{-\ell_D s}\delta(s)\,ds.
\end{equation}
Multiplying by \(e^{\ell_D t}\), we obtain
\begin{equation}
\|e(t)\|
\ge
e^{\ell_D t}\|e(0)\|
-
\int_0^t e^{\ell_D(t-s)}\delta(s)\,ds,
\end{equation}
which is exactly \eqref{eq:error-lower-localized}.
\end{proof}

\begin{remark}\label{rem:lower-nonnegative}
Since $\|e(t)\|\ge 0$, the practical lower certificate is obtained by taking the nonnegative truncation of the right-hand side of \eqref{eq:error-lower-localized}.
\end{remark}

\begin{remark}[When is the lower certificate informative?]
\label{rem:lower-informative}
Define the raw localized lower estimator by
\begin{equation}\label{eq:lower-raw-estimator}
\underline E_{\mathrm{loc,raw}}(t)
:=
e^{\ell_D t}\|e(0)\|
-
\int_0^t e^{\ell_D(t-s)}\delta(s)\,ds.
\end{equation}
Then the practical lower certificate from \Cref{rem:lower-nonnegative} is nontrivial exactly when
\begin{equation}
\underline E_{\mathrm{loc,raw}}(t)>0.
\end{equation}
In particular, if \(e(0)=0\) and only an upper residual majorant is available, then the lower certificate in \Cref{thm:localized-lower} may become trivial after nonnegativity post-processing. This motivates the signed-residual finite-probe certificate for linear systems in \Cref{thm:signed-residual-probe}; no positive lower bound can be obtained from scalar residual-majorant information alone in this regime.
\end{remark}

\subsection{Nontrivial lower information beyond scalar residual majorants}
\label{sec:lower-repair}

The scalar lower certificate in \Cref{thm:localized-lower} is intentionally based only on the residual norm majorant. This is useful because it is computationally intensive and does not require directional information, but it also explains the main failure mode. If
\begin{equation}
 e^{\ell_Dt}\|e(0)\|\le \int_0^t e^{\ell_D(t-s)}\delta(s)\,ds,
\end{equation}
then the nonnegative truncation of the scalar estimator is zero. This should not be interpreted as a numerical defect. It is an information limitation: from the quantities \(\|e(0)\|\), \(\ell_D\), and a scalar upper bound \(\|\mathcal R_{\hat\varphi}\|\le \delta\) alone, the residual may point in a direction that cancels the transported initial error. A positive lower certificate in this regime therefore requires additional computable information.

\begin{theorem}[Signed-residual finite-probe lower certificate for linear systems]
\label{thm:signed-residual-probe}

Consider the linear inhomogeneous autonomous system
\begin{equation}
 x'(t)=Ax(t)+g(t),\qquad x(0)=x_0,
\end{equation}
and let \(\hat x\in C^1([0,T];\mathbb R^n)\) be an approximation with residual
\begin{equation}
 R(t)=\hat x'(t)-A\hat x(t)-g(t).
\end{equation}
Then the error \(e(t)=x(t)-\hat x(t)\) satisfies the exact identity
\begin{equation}
 e(t)=e^{At}e(0)-\int_0^t e^{A(t-s)}R(s)\,ds.
\end{equation}
Consequently, for every finite set \(P\subset\mathbb R^n\) of unit vectors,
\begin{equation}
 \|e(t)\|\ge
 \max_{p\in P}\left|
 p^\top e^{At}e(0)-\int_0^t p^\top e^{A(t-s)}R(s)\,ds
 \right|.
\label{eq:probe-lower}
\end{equation}
\end{theorem}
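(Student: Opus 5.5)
The proof will be a direct variation-of-constants computation followed by a dual-norm (Cauchy--Schwarz) argument; no monotonicity constant is needed, in contrast to \Cref{thm:localized-lower}.

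\textbf{Error equation.} Since \(x'=Ax+g\) and \(\hat x'=A\hat x+g+R\), subtracting gives the linear inhomogeneous equation
\begin{equation}
e'(t)=Ae(t)-R(t),\qquad t\in[0,T].
\end{equation}
Here \(e\in C^1\) because both \(x\) and \(\hat x\) are \(C^1\), and \(R\) is continuous.

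\textbf{Exact identity.} Multiply the error equation by the integrating factor \(e^{-At}\), using that \(\frac{d}{dt}e^{-At}=-Ae^{-At}=-e^{-At}A\). This gives
\begin{equation}
\frac{d}{dt}\bigl(e^{-At}e(t)\bigr)=-e^{-At}R(t).
\end{equation}
Integrate from \(0\) to \(t\) by the fundamental theorem of calculus, applied componentwise and valid since the integrand is continuous. Then multiply by \(e^{At}\), using \(e^{At}e^{-As}=e^{A(t-s)}\), which holds because the two matrices commute. The result is exactly
\begin{equation}
e(t)=e^{At}e(0)-\int_0^t e^{A(t-s)}R(s)\,ds.
\end{equation}
Uniqueness of solutions is not needed, since the identity is derived directly from the equation satisfied by \(e\).

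\textbf{Probe bound.} Fix a unit vector \(p\in P\). By Cauchy--Schwarz, \(\|e(t)\|=\|p\|\,\|e(t)\|\ge |p^\top e(t)|\). Apply \(p^\top\) to the identity and use linearity of the integral, which lets \(p^\top\) pass inside. This gives
\begin{equation}
p^\top e(t)=p^\top e^{At}e(0)-\int_0^t p^\top e^{A(t-s)}R(s)\,ds.
\end{equation}
Since \(P\) is finite, taking the maximum over \(p\in P\) yields \eqref{eq:probe-lower}.

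\textbf{Main point of care.} There is no serious obstacle. The only points to verify are the commutation \(e^{At}e^{-As}=e^{A(t-s)}\) and the interchange of the linear functional \(p^\top\) with the integral, both of which are standard.
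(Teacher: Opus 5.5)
Your proposal is correct and follows the same route as the paper: you subtract the equations to get \(e'=Ae-R\), apply variation of constants (which you derive explicitly via the integrating factor \(e^{-At}\)), and then use Cauchy--Schwarz, \(\|e(t)\|\ge|p^\top e(t)|\), for each unit probe before maximizing over the finite set \(P\). The only implicit assumption worth stating is that \(g\) is continuous, so that \(x\in C^1\) and \(R\) is continuous.
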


\begin{proof}
Subtracting the approximate equation from the exact equation gives
\begin{equation}
 e'(t)=Ae(t)-R(t).
\end{equation}
The variation-of-constants formula gives
\begin{equation}
 e(t)=e^{At}e(0)-\int_0^t e^{A(t-s)}R(s)\,ds.
\end{equation}
For any unit vector \(p\), the Cauchy--Schwarz inequality gives \(|p^\top e(t)|\le \|e(t)\|\). Taking the maximum over the finite probe set \(P\) proves \eqref{eq:probe-lower}. 
\end{proof}

\begin{theorem}[Localized upper bound under upper one-sided Lipschitz continuity]
\label{thm:localized-upper}
Let $D\subset\mathbb{R}^n$ be convex, and let $x,\hat{x}\in C^1(\mathbb{T};D)$, where $x$ solves \eqref{eq:ode}. Assume that there exists a constant $\mu_D\in\mathbb{R}$ such that
\begin{equation}\label{eq:upper-osl-thm}
\bigl(f(t,u)-f(t,v)\bigr)\cdot(u-v)\le \mu_D\|u-v\|^2
\end{equation}
for all $u,v\in D$ and all $t\in\mathbb{T}$. Let $\delta:\mathbb{T}\to\mathbb{R}_+$ be continuous and satisfy \eqref{eq:residual-upper}. Then, for every $t\in\mathbb{T}$, the prediction error satisfies
\begin{equation}\label{eq:error-upper-localized}
\|e(t)\|
\le
e^{\mu_D t}\|e(0)\|
+
\int_0^t e^{\mu_D(t-s)}\delta(s)\,ds.
\end{equation}
\end{theorem}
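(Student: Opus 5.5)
The argument mirrors the proof of \Cref{thm:localized-lower} with all inequalities reversed. Set \(e=x-\hat{x}\). Since \(x\) solves \eqref{eq:ode} and the residual is \eqref{eq:residual}, the error obeys
\begin{equation}
\dot e(t)=f(t,x(t))-f(t,\hat{x}(t))-\mathcal{R}_{\hat{\varphi}}(t).
\end{equation}
The main technical point is the non-differentiability of \(\|e(t)\|\) where \(e(t)=0\). The upper bound should not even need the a.e.\ argument used before. Instead we work with the squared norm \(E(t):=\|e(t)\|^2\), which is \(C^1\), or with the regularization \(N_\varepsilon(t):=\sqrt{\|e(t)\|^2+\varepsilon^2}\) for \(\varepsilon>0\), which is also \(C^1\).

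Compute \(\dot E=2e\cdot\dot e\). Since \(x(t),\hat{x}(t)\in D\), the one-sided condition \eqref{eq:upper-osl-thm} gives
\begin{equation}
e\cdot\bigl(f(t,x)-f(t,\hat x)\bigr)\le \mu_D\|e\|^2 .
\end{equation}
By Cauchy--Schwarz and \eqref{eq:residual-upper}, \(-e\cdot\mathcal{R}_{\hat\varphi}\le \|e\|\delta\). Together these yield
\begin{equation}
\tfrac12\dot E\le \mu_D E+\sqrt{E}\,\delta .
\end{equation}
For the regularized quantity, \(\dot N_\varepsilon=e\cdot\dot e/N_\varepsilon\). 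Using \(\|e\|\le N_\varepsilon\) and \(\|e\|^2=N_\varepsilon^2-\varepsilon^2\), we obtain
\begin{equation}
\dot N_\varepsilon\le \mu_D\frac{\|e\|^2}{N_\varepsilon}+\delta .
\end{equation}
If \(\mu_D\ge0\), then \(\|e\|^2/N_\varepsilon\le N_\varepsilon\), so \(\dot N_\varepsilon\le\mu_D N_\varepsilon+\delta\). If \(\mu_D<0\), then \(\mu_D\|e\|^2/N_\varepsilon=\mu_D N_\varepsilon-\mu_D\varepsilon^2/N_\varepsilon\le \mu_D N_\varepsilon+|\mu_D|\varepsilon\), so \(\dot N_\varepsilon\le \mu_D N_\varepsilon+\delta+|\mu_D|\varepsilon\). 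In both cases
\begin{equation}
\dot N_\varepsilon\le \mu_D N_\varepsilon+\delta+|\mu_D|\varepsilon .
\end{equation}

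Next apply the integrating factor \(e^{-\mu_D t}\): it gives \(\tfrac{d}{dt}\bigl(e^{-\mu_D t}N_\varepsilon\bigr)\le e^{-\mu_D t}(\delta+|\mu_D|\varepsilon)\). Integrating over \([0,t]\), which is legitimate since everything is \(C^1\) and \(\delta\) is continuous, gives
\begin{equation}
N_\varepsilon(t)\le e^{\mu_D t}N_\varepsilon(0)+\int_0^t e^{\mu_D(t-s)}\bigl(\delta(s)+|\mu_D|\varepsilon\bigr)\,ds .
\end{equation}
Now let \(\varepsilon\to0\). Then \(N_\varepsilon(t)\to\|e(t)\|\), \(N_\varepsilon(0)\to\|e(0)\|\), and the \(\varepsilon\)-term vanishes because it is bounded by \(\varepsilon|\mu_D|\,t\,e^{|\mu_D|T}\). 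This yields exactly \eqref{eq:error-upper-localized}.

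The only delicate step is the handling of \(e(t)=0\), which the regularization resolves. Alternatively, one could use the a.e.\ derivative argument of \Cref{thm:localized-lower} on the open set \(\{e\neq0\}\) and restart the comparison from the last zero of \(e\). That alternative also works, since the bound is monotone in the initial value and holds trivially at a zero.
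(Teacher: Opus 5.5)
Your proof is correct and follows the same route as the paper: the error equation, the one-sided bound $e\cdot(f(t,x)-f(t,\hat x))\le\mu_D\|e\|^2$ combined with Cauchy--Schwarz on the residual, and the integrating factor $e^{-\mu_D t}$. The only difference is how the zeros of $e$ are handled. The paper differentiates $\|e(t)\|$ almost everywhere on $\{e\neq0\}$ and integrates the a.e.\ inequality for the absolutely continuous function $e^{-\mu_D t}\|e(t)\|$; this tacitly uses that at differentiability points with $e(t)=0$ the derivative of $\|e\|$ vanishes, so the inequality also holds there because $\delta\ge0$. Your $C^1$ regularization $N_\varepsilon$ makes that step unnecessary, at the cost of the harmless extra term $|\mu_D|\varepsilon$ when $\mu_D<0$.
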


\begin{proof}
Let
\begin{equation}
e(t):=x(t)-\hat{x}(t),
\quad t\in\mathbb{T}.
\end{equation}
Then
\begin{equation}\label{eq:error-dynamics-upper-app}
\dot e(t)
=
f(t,x(t))-f(t,\hat{x}(t))
-
\mathcal{R}_{\hat{\varphi}}(t).
\end{equation}
For almost every \(t\in\mathbb{T}\) such that \(e(t)\neq 0\),
\begin{equation}\label{eq:app-norm-derivative-upper}
\frac{d}{dt}\|e(t)\|
=
\frac{e(t)\cdot \dot e(t)}{\|e(t)\|}.
\end{equation}
Using \eqref{eq:error-dynamics-upper-app}, we obtain
\begin{equation}\label{eq:app-upper-split}
\frac{d}{dt}\|e(t)\|
=
\frac{e(t)\cdot\bigl(f(t,x(t))-f(t,\hat{x}(t))\bigr)}{\|e(t)\|}
-
\frac{e(t)\cdot \mathcal{R}_{\hat{\varphi}}(t)}{\|e(t)\|}.
\end{equation}
By \eqref{eq:upper-osl-thm},
\begin{equation}\label{eq:app-upper-monotonicity}
e(t)\cdot\bigl(f(t,x(t))-f(t,\hat{x}(t))\bigr)
\le
\mu_D\|e(t)\|^2.
\end{equation}
Also, by Cauchy--Schwarz and \eqref{eq:residual-upper},
\begin{equation}\label{eq:app-upper-residual}
-e(t)\cdot \mathcal{R}_{\hat{\varphi}}(t)
\le
\|e(t)\|\,\|\mathcal{R}_{\hat{\varphi}}(t)\|
\le
\|e(t)\|\,\delta(t).
\end{equation}
Therefore,
\begin{equation}\label{eq:app-upper-diff-ineq}
\frac{d}{dt}\|e(t)\|
\le
\mu_D\|e(t)\|+\delta(t)
\quad \text{for a.e. } t\in\mathbb{T}.
\end{equation}
Multiplying \eqref{eq:app-upper-diff-ineq} by \(e^{-\mu_D t}\) yields
\begin{equation}\label{eq:app-upper-integrating-factor}
\frac{d}{dt}\Bigl(e^{-\mu_D t}\|e(t)\|\Bigr)
\le
e^{-\mu_D t}\delta(t)
\qquad \text{for a.e. } t\in\mathbb{T}.
\end{equation}
Integrating from \(0\) to \(t\) gives
\begin{equation}\label{eq:app-upper-integrated}
e^{-\mu_D t}\|e(t)\|-\|e(0)\|
\le
\int_0^t e^{-\mu_D s}\delta(s)\,ds.
\end{equation}
Multiplying by \(e^{\mu_D t}\), we obtain
\begin{equation}
\|e(t)\|
\le
e^{\mu_D t}\|e(0)\|
+
\int_0^t e^{\mu_D(t-s)}\delta(s)\,ds,
\end{equation}
which is exactly \eqref{eq:error-upper-localized}.
\end{proof}

\begin{theorem}[Upper one-sided Lipschitz continuity is weaker than Lipschitz continuity]\label{thm:osl-weaker-than-lip}
Assume that there exists a constant \(L_D\ge 0\) such that
\begin{equation}\label{eq:local-lip-prop}
\|f(t,u)-f(t,v)\|\le L_D\|u-v\|
\end{equation}
for all \(u,v\in D\) and all \(t\in\mathbb{T}\). Then
\begin{equation}\label{eq:lip-implies-osl}
\bigl(f(t,u)-f(t,v)\bigr)\cdot(u-v)\le L_D\|u-v\|^2
\end{equation}
for all \(u,v\in D\) and all \(t\in\mathbb{T}\).
Hence every local Lipschitz constant is also a valid upper one-sided Lipschitz constant. The converse is false in general.
\end{theorem}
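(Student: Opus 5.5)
The implication is a single application of the Cauchy--Schwarz inequality. Fix $t\in\mathbb{T}$ and $u,v\in D$. Then
\begin{equation}
\bigl(f(t,u)-f(t,v)\bigr)\cdot(u-v)\le \|f(t,u)-f(t,v)\|\,\|u-v\|\le L_D\|u-v\|^2,
\end{equation}
where the first step is Cauchy--Schwarz and the second is \eqref{eq:local-lip-prop}. This is exactly \eqref{eq:lip-implies-osl}. It follows that any $L_D$ satisfying \eqref{eq:local-lip-prop} is an admissible $\mu_D$ in \eqref{eq:local-upper-osl}, and hence in \Cref{thm:localized-upper}.

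For the claim that the converse fails, I will give a counterexample of the following kind: a function satisfying \eqref{eq:upper-osl-thm} for some finite $\mu_D$ but admitting no finite Lipschitz constant on $D$. I will use the scalar case $n=1$ with $D=\mathbb{R}$ (convex) and the autonomous right-hand side $f(t,x)=-x^3$. For $u,v\in\mathbb{R}$,
\begin{equation}
\bigl(f(u)-f(v)\bigr)(u-v)=-(u^3-v^3)(u-v)=-(u^2+uv+v^2)(u-v)^2\le 0,
\end{equation}
because $u^2+uv+v^2\ge 0$. Hence $\mu_D=0$ is a valid upper one-sided Lipschitz constant. On the other hand, $|f(u)-f(v)|/|u-v|=u^2+uv+v^2$ is unbounded on $D$ (take $v=0$ and $u\to\infty$), so no global Lipschitz constant exists.

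Since the paper works with localized constants, I will also address bounded domains, where the converse fails in a quantitative sense. The linear example $f(x)=Ax$ with $A=\begin{pmatrix}-1&k\\-k&-1\end{pmatrix}$ works on any bounded convex $D$ with nonempty interior. Here $\sym A=-I$, so $\mu_D=-1$ is admissible by the Jacobian characterization, while $\|A\|_2=\sqrt{1+k^2}$ is arbitrarily large. Thus no bound of the form $L_D\le C\,\mu_D$ can hold, and $\mu_D$ can even be negative while $L_D>0$.

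There is no real obstacle in this argument. The only point needing care is to state precisely what ``the converse'' means: either that the one-sided bound does not imply a Lipschitz bound at all (the cubic example on $\mathbb{R}$), or that it does not control the Lipschitz constant (the rotation example). I will present both.
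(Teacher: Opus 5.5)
Your proposal is correct and follows the paper's proof. Cauchy--Schwarz combined with the Lipschitz bound gives the implication, and the scalar counterexample $f(x)=-x^3$ on $\mathbb{R}$ (one-sided constant $0$, unbounded difference quotient $u^2+uv+v^2$) shows the converse fails. Your extra rotation example ($\sym A=-I$ but $\|A\|_2=\sqrt{1+k^2}$) is a valid supplement that the statement does not need. State it as ``for fixed $\mu_D=-1$ the Lipschitz constant can be arbitrarily large'' rather than ``no bound $L_D\le C\mu_D$ can hold,'' since the latter fails trivially once $\mu_D<0$.
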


\begin{proof}
Assume that \eqref{eq:local-lip-prop} holds. Then, for all \(u,v\in D\) and all \(t\in\mathbb{T}\), the Cauchy--Schwarz inequality yields
\begin{equation}\label{eq:app-cs-osl}
\bigl(f(t,u)-f(t,v)\bigr)\cdot(u-v)
\le
\|f(t,u)-f(t,v)\|\,\|u-v\|.
\end{equation}
Using \eqref{eq:local-lip-prop}, we obtain
\begin{equation}\label{eq:app-lip-implies-osl}
\bigl(f(t,u)-f(t,v)\bigr)\cdot(u-v)
\le
L_D\|u-v\|^2.
\end{equation}
Hence every local Lipschitz constant is also a valid upper one-sided Lipschitz constant.

To prove that the converse fails in general, consider the scalar function
\begin{equation}\label{eq:app-cubic}
f(x)=-x^3,
\qquad x\in\mathbb{R}.
\end{equation}
Then, for all \(u,v\in\mathbb{R}\),
\begin{equation}\label{eq:app-cubic-osl}
(f(u)-f(v))(u-v)
=
-(u-v)^2(u^2+uv+v^2)
\le 0.
\end{equation}
Thus \(f\) is globally upper one-sided Lipschitz with constant \(\mu=0\). However, \(f\) is not globally Lipschitz on \(\mathbb{R}\), because
\begin{equation}\label{eq:app-cubic-derivative}
f'(x)=-3x^2
\end{equation}
is unbounded as \(|x|\to\infty\). Therefore upper one-sided Lipschitz continuity is strictly weaker than Lipschitz continuity.
\end{proof}

Consequently, the upper one-sided Lipschitz assumption in \Cref{thm:localized-upper} is strictly weaker than the Lipschitz-continuity assumption used in earlier certification results based on Lipschitz bounds.

\begin{remark}\label{rem:sharper-than-lipschitz}
Whenever a local Lipschitz constant $L_D$ is available, \Cref{thm:localized-upper} applies with $\mu_D=L_D$ by \Cref{thm:osl-weaker-than-lip}. If, however, one can estimate a smaller upper one-sided Lipschitz constant $\mu_D<L_D$, then \eqref{eq:error-upper-localized} yields a sharper upper error certificate.
\end{remark}

\begin{remark}[Hilbert-space PDE analogue]
\label{rem:pde-analogue}
The arguments above are formulated for ordinary differential equations in \(\mathbb{R}^n\), but their natural continuation is to evolution equations in a Hilbert space \(X\). In that setting, the Euclidean inner product is replaced by the Hilbert-space inner product, and coercivity or one-sided Lipschitz estimates are imposed on the underlying operator in the relevant function space.

Consider, formally, a linear evolution equation
\begin{equation}\label{eq:pde-linear-formal}
\partial_t u(t)=\mathcal A u(t)+g(t)
\end{equation}
in \(X\), together with an approximate solution \(\hat u\). The corresponding residual is then
\begin{equation}\label{eq:pde-residual-formal}
\mathcal R_{\hat u}(t)
:=
\partial_t \hat u(t)-\mathcal A \hat u(t)-g(t).
\end{equation}
The analogues of the linear ODE constants are the lower and upper logarithmic bounds
\begin{align}
m_X(\mathcal A)
&:=
\inf_{\substack{v\in D(\mathcal A)\\ \|v\|_X=1}}
\Re\langle \mathcal A v,v\rangle_X,
\\
M_X(\mathcal A)
&:=
\sup_{\substack{v\in D(\mathcal A)\\ \|v\|_X=1}}
\Re\langle \mathcal A v,v\rangle_X,
\end{align}
whenever these quantities are finite. Under suitable semigroup well-posedness and regularity assumptions, the same energy argument formally yields lower and upper error bounds in the norm of \(X\), with exponential kernels generated by \(m_X(\mathcal A)\) and \(M_X(\mathcal A)\).

Thus, the present ODE theory is consistent with the semigroup-based PDE certification viewpoint developed in \cite{hillebrecht2025rigorous,hillebrecht2025stokes}. Since the present paper is devoted to ODEs, we include this observation only as a conceptual remark rather than as a full PDE theorem.
\end{remark}

\subsection{Linear case: sharpened bounds}
\label{sec:linear-systems}

For linear systems, the localized nonlinear framework specializes naturally to constants derived from the symmetric part of the system matrix. In particular, the nonlinear quantities \(\ell_D\), \(\mu_D\), and \(L_D\) are replaced, in the linear case, by the lower symmetric-part bound, the upper symmetric-part bound, and the Euclidean operator norm, respectively.

We first consider the linear time-invariant system
\begin{equation}\label{eq:matrix}
f(t,x)=Ax,
\quad
A\in\mathbb{R}^{n\times n}.
\end{equation}
Its symmetric part is
\begin{equation}\label{eq:symmetric-part}
\sym A:=\frac{A+A^\top}{2}.
\end{equation}
We further define
\begin{equation}\label{eq:mA-MA}
m(A):=\lambda_{\min}(\sym A),
\quad
M(A):=\lambda_{\max}(\sym A).
\end{equation}

\begin{lemma}[Sharp linear one-sided constants]
\label{lem:linear-one-sided}
Let \(f(x)=Ax\). Then, for all \(u,v\in\mathbb{R}^n\),
\begin{equation}\label{eq:linear-quad-form}
(f(u)-f(v))\cdot(u-v)
=
(u-v)^\top (\sym A)(u-v).
\end{equation}
Consequently,
\begin{equation}\label{eq:linear-spectral-sandwich}
m(A)\|z\|^2
\le
z^\top (\sym A) z
\le
M(A)\|z\|^2,
\quad z\in\mathbb{R}^n.
\end{equation}
Hence \(m(A)\) is the sharp lower one-sided Lipschitz constant and \(M(A)\) is the sharp upper one-sided Lipschitz constant for the linear map \(x\mapsto Ax\). In particular, the strong monotonicity condition holds if and only if \(m(A)>0\).
\end{lemma}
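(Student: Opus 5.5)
The plan has three steps. First we establish the quadratic-form identity, then the spectral sandwich, and finally the sharpness and strong-monotonicity claims.

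\emph{Quadratic-form identity.} Set \(z:=u-v\). Linearity gives \(f(u)-f(v)=Az\), so
\begin{equation}
(f(u)-f(v))\cdot(u-v)=z^\top A^\top z .
\end{equation}
A scalar equals its own transpose, hence \(z^\top A z=z^\top A^\top z\). Averaging the two expressions gives \(z^\top A z=z^\top(\sym A)z\). The skew part \((A-A^\top)/2\) contributes nothing to the quadratic form, and this proves the first identity.

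\emph{Spectral sandwich.} The matrix \(\sym A\) is real symmetric, so the spectral theorem gives an orthogonal \(Q\) and a diagonal \(\Lambda=\operatorname{diag}(\lambda_1,\dots,\lambda_n)\) with \(\sym A=Q\Lambda Q^\top\). Put \(w:=Q^\top z\). Then
\begin{equation}
z^\top(\sym A)z=\sum_i \lambda_i w_i^2 ,
\end{equation}
and orthogonality gives \(\|w\|=\|z\|\). Bounding each \(\lambda_i\) between \(m(A)\) and \(M(A)\) yields \(m(A)\|z\|^2\le z^\top(\sym A)z\le M(A)\|z\|^2\). This is the Rayleigh-quotient bound, and it is immediate.

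\emph{Sharpness.} This is the only part needing a small argument. The bounds show that \(m(A)\) and \(M(A)\) are admissible one-sided coefficients. For optimality, let \(z_{\min}\) be a unit eigenvector of \(\sym A\) for \(\lambda_{\min}\). Choose \(v=0\) and \(u=z_{\min}\). Then
\begin{equation}
(f(u)-f(v))\cdot(u-v)=m(A)\|u-v\|^2 .
\end{equation}
Hence no \(\ell>m(A)\) can satisfy \eqref{eq:local-lower-osl} on \(\mathbb{R}^n\). The same construction with an eigenvector for \(\lambda_{\max}\) shows that no \(\mu<M(A)\) satisfies \eqref{eq:local-upper-osl}. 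Together with the bounds, this identifies \(m(A)\) and \(M(A)\) as the largest lower and the smallest upper one-sided constants, consistent with \Cref{rem:signs-admissibility}. The same argument works on any domain \(D\) with nonempty interior: take \(v\) an interior point and \(u=v+\varepsilon z_{\min}\), since the identity depends only on \(u-v\).

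\emph{Strong monotonicity.} The condition means that some \(\ell>0\) is admissible. If \(m(A)>0\), take \(\ell=m(A)\). Conversely, if some \(\ell>0\) is admissible, sharpness gives \(m(A)\ge\ell>0\).

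I expect no real obstacle. The only point requiring care is interpreting ``sharp'' as optimality of the constants, which the eigenvector test pair handles.
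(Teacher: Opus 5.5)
Your proof is correct and follows essentially the same route as the paper: the skew-symmetric part drops out of the quadratic form, and the Rayleigh--Ritz (spectral-theorem) bound gives the sandwich. Your eigenvector test pair $u=z_{\min}$, $v=0$ goes slightly further than the paper, which only records that every $\ell\le m(A)$ and every $\mu\ge M(A)$ is admissible and then asserts sharpness; your attainment argument is what actually proves optimality and the ``only if'' direction of the strong-monotonicity equivalence.
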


\begin{proof}
Let \(f(x)=Ax\) and set
\begin{equation}\label{eq:app-linear-w}
w:=u-v.
\end{equation}
Then
\begin{equation}\label{eq:app-linear-start}
(f(u)-f(v))\cdot(u-v)
=
w^\top A w.
\end{equation}
Decompose \(A\) into its symmetric and skew-symmetric parts:
\begin{equation}\label{eq:app-linear-decomp}
A=\sym A+\skew A,
\end{equation}
where
\begin{equation}\label{eq:app-sym-skew}
\sym A:=\frac{A+A^\top}{2},
\qquad
\skew A:=\frac{A-A^\top}{2}.
\end{equation}
Since \(\skew A^\top=-\skew A\), we have
\begin{equation}\label{eq:app-skew-zero}
w^\top(\skew A)w=0
\qquad \text{for all } w\in\mathbb{R}^n.
\end{equation}
Therefore,
\begin{equation}\label{eq:app-linear-sym-only}
(f(u)-f(v))\cdot(u-v)
=
w^\top (\sym A) w.
\end{equation}
Since \(\sym A\) is symmetric, the Rayleigh--Ritz characterization yields
\begin{equation}\label{eq:app-rayleigh}
\lambda_{\min}(\sym A)\|w\|^2
\le
w^\top(\sym A)w
\le
\lambda_{\max}(\sym A)\|w\|^2.
\end{equation}
Using the definitions of \(m(A)\) and \(M(A)\), this becomes
\begin{equation}\label{eq:app-linear-sandwich}
m(A)\|w\|^2
\le
w^\top(\sym A)w
\le
M(A)\|w\|^2.
\end{equation}
This proves \eqref{eq:linear-quad-form} and \eqref{eq:linear-spectral-sandwich}.

It follows immediately that every constant \(\ell\le m(A)\) is a valid lower one-sided Lipschitz constant, and every constant \(\mu\ge M(A)\) is a valid upper one-sided Lipschitz constant. Hence \(m(A)\) is the sharp lower one-sided Lipschitz constant and \(M(A)\) is the sharp upper one-sided Lipschitz constant. In particular, the strong monotonicity condition holds if and only if \(m(A)>0\).
\end{proof}

\begin{lemma}[Matrix-exponential bounds]
\label{lem:exp-bounds-eig}
Let \(A\in\mathbb{R}^{n\times n}\), and let \(m(A)\) and \(M(A)\) be defined by \eqref{eq:mA-MA}. Then, for all \(t\ge 0\) and all \(z\in\mathbb{R}^n\),
\begin{equation}\label{eq:matrix-exp-bounds}
e^{m(A)t}\|z\|
\le
\|e^{At}z\|
\le
e^{M(A)t}\|z\|.
\end{equation}
\end{lemma}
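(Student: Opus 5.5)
The plan is an energy argument on $y(t):=e^{At}z$, with the two-sided inequality obtained by Grönwall-type integration in both directions. The case $z=0$ is trivial, so assume $z\neq 0$. Then $y(t)\neq 0$ for all $t\ge 0$, because $e^{At}$ is invertible. Since $\dot y=Ay$, Lemma~\ref{lem:linear-one-sided}, applied with $u=y$ and $v=0$, gives
\begin{equation}
\frac{d}{dt}\|y(t)\|^2=2\,y(t)^\top A\,y(t)=2\,y(t)^\top(\sym A)\,y(t).
\end{equation}
The spectral sandwich \eqref{eq:linear-spectral-sandwich} then yields the pair of differential inequalities
\begin{equation}
2m(A)\|y(t)\|^2\le \frac{d}{dt}\|y(t)\|^2\le 2M(A)\|y(t)\|^2 .
\end{equation}

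Next, I would multiply by integrating factors. The derivative of $e^{-2M(A)t}\|y(t)\|^2$ is nonpositive, so this function is nonincreasing. The derivative of $e^{-2m(A)t}\|y(t)\|^2$ is nonnegative, so that function is nondecreasing. Comparing both with their values at $t=0$, where $\|y(0)\|=\|z\|$, gives
\begin{equation}
e^{2m(A)t}\|z\|^2\le\|y(t)\|^2\le e^{2M(A)t}\|z\|^2 .
\end{equation}
Taking square roots gives \eqref{eq:matrix-exp-bounds}.

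Working with $\|y\|^2$ instead of $\|y\|$ avoids any differentiability issue at zeros; this is the only mildly delicate point. In any case, $y$ never vanishes when $z\ne0$. Alternatively, the upper bound follows directly from Theorem~\ref{thm:localized-upper} with $D=\mathbb{R}^n$, $\hat x\equiv 0$ (zero residual, so $\delta\equiv0$) and $\mu_D=M(A)$. Likewise, the lower bound follows from Theorem~\ref{thm:localized-lower} with $\ell_D=m(A)$, using $e(t)=e^{At}z$. In both cases the admissibility of the constants comes from Lemma~\ref{lem:linear-one-sided}.
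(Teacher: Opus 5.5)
Your proof is correct and follows essentially the same energy argument as the paper. Both differentiate $\|e^{At}z\|^2$, replace $y^\top A y$ by $y^\top(\sym A)y$, apply the Rayleigh--Ritz sandwich, integrate and take square roots. The only difference is cosmetic: the paper divides by $\|x(t)\|^2$ and integrates $\log$, whereas your integrating factors $e^{-2m(A)t}$ and $e^{-2M(A)t}$ avoid the division, and your observation that $e^{At}$ is invertible settles cleanly the positivity of $\|y(t)\|$ that the paper treats somewhat loosely.
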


\begin{proof}
Let
\begin{equation}\label{eq:app-matrix-exp-trajectory}
x(t):=\mathrm{e}^{At}z,
\qquad t\ge 0.
\end{equation}
Then \(x'(t)=Ax(t)\). Writing \(S:=\sym A\), we compute
\begin{align}
\frac12\frac{d}{dt}\|x(t)\|^2
&=
x(t)^\top A x(t) \nonumber\\
&=
x(t)^\top S x(t). \label{eq:app-energy-identity}
\end{align}
By the spectral bounds for the symmetric matrix \(S\),
\begin{equation}\label{eq:app-energy-sandwich}
m(A)\|x(t)\|^2
\le
x(t)^\top S x(t)
\le
M(A)\|x(t)\|^2.
\end{equation}
Hence
\begin{equation}\label{eq:app-y-ineq}
2m(A)\|x(t)\|^2
\le
\frac{d}{dt}\|x(t)\|^2
\le
2M(A)\|x(t)\|^2.
\end{equation}
Define
\begin{equation}\label{eq:app-y-def}
y(t):=\|x(t)\|^2.
\end{equation}
Then
\begin{equation}\label{eq:app-y-diff}
2m(A)y(t)\le y'(t)\le 2M(A)y(t).
\end{equation}
If \(y(0)=0\), then \(z=0\), hence \(x(t)\equiv 0\), and the claim is trivial. Assume now that \(y(0)>0\). Since \(y\) is continuous and satisfies \eqref{eq:app-y-diff}, it remains nonnegative for all \(t\ge 0\), and on every interval where \(y(t)>0\) we may divide by \(y(t)\) and integrate from \(0\) to \(t\) to obtain
\begin{equation}\label{eq:app-log-y}
2m(A)t
\le
\log y(t)-\log y(0)
\le
2M(A)t.
\end{equation}
Exponentiating gives
\begin{equation}\label{eq:app-y-exp}
\mathrm{e}^{2m(A)t}y(0)
\le
y(t)
\le
\mathrm{e}^{2M(A)t}y(0).
\end{equation}
Since \(y(0)=\|z\|^2\), taking square roots proves \eqref{eq:matrix-exp-bounds}.
\end{proof}

\begin{theorem}[LTI specialization of the nonlinear certificates]
\label{thm:lti-specialization}
Assume that \eqref{eq:ode} is given by \eqref{eq:matrix}, and let \(\hat x\) be a sufficiently smooth PINN approximation. Suppose that
\begin{equation}\label{eq:lti-residual-bound}
\|\mathcal{R}_{\hat\varphi}(t)\|
\le
\delta(t),
\quad t\in\mathbb{T},
\end{equation}
for some continuous function \(\delta:\mathbb{T}\to\mathbb{R}_+\). Then, for every \(t\in\mathbb{T}\),
\begin{align}
\|e(t)\|
&\ge
e^{m(A)t}\|e(0)\|
-
\int_0^t e^{m(A)(t-s)}\,\delta(s)\,ds,
\label{eq:lti-lower}
\\[0.5ex]
\|e(t)\|
&\le
e^{M(A)t}\|e(0)\|
+
\int_0^t e^{M(A)(t-s)}\,\delta(s)\,ds.
\label{eq:lti-upper}
\end{align}
If, in addition, \(\delta(t)\equiv\bar\delta\) is constant, then
\begin{align}
\|e(t)\|
&\ge
e^{m(A)t}\|e(0)\|
-
\Psi_{m(A)}(t)\,\bar\delta,
\label{eq:lti-lower-constant-delta}
\\[0.5ex]
\|e(t)\|
&\le
e^{M(A)t}\|e(0)\|
+
\Psi_{M(A)}(t)\,\bar\delta,
\label{eq:lti-upper-constant-delta}
\end{align}
where
\begin{equation}\label{eq:PsiM}
\Psi_\alpha(t):=
\begin{cases}
\dfrac{e^{\alpha t}-1}{\alpha}, & \alpha\neq 0,\\[1.2ex]
t, & \alpha=0.
\end{cases}
\end{equation}
\end{theorem}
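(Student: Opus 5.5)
The plan is to derive \Cref{thm:lti-specialization} directly from the nonlinear certificates, \Cref{thm:localized-lower} and \Cref{thm:localized-upper}, with the one-sided constants supplied by \Cref{lem:linear-one-sided}. Take $D=\mathbb{R}^n$, which is convex. For the linear field $f(t,x)=Ax$, the exact solution $x(t)=e^{At}x_0$ and the smooth PINN trajectory $\hat x$ both lie in $C^1(\mathbb{T};\mathbb{R}^n)$. The containment hypothesis $x,\hat x\in C^1(\mathbb{T};D)$ therefore holds automatically, and no certified-domain restriction is needed.

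Next, \Cref{lem:linear-one-sided} gives, for all $u,v\in\mathbb{R}^n$ and all $t$,
\[
m(A)\|u-v\|^2\le (Au-Av)\cdot(u-v)\le M(A)\|u-v\|^2 .
\]
So \eqref{eq:lower-osl-thm} holds with $\ell_D=m(A)$, and \eqref{eq:upper-osl-thm} holds with $\mu_D=M(A)$. The residual hypothesis \eqref{eq:lti-residual-bound} is exactly \eqref{eq:residual-upper}. Applying \Cref{thm:localized-lower} then yields \eqref{eq:lti-lower}, and applying \Cref{thm:localized-upper} yields \eqref{eq:lti-upper}.

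As a cross-check, one could instead use the variation-of-constants identity from \Cref{thm:signed-residual-probe} together with \Cref{lem:exp-bounds-eig}. This route gives the upper bound directly. It gives the lower bound via the reverse triangle inequality $\|e(t)\|\ge\|e^{At}e(0)\|-\int_0^t\|e^{A(t-s)}R(s)\|\,ds$. The caveat is that the integral term is then controlled by $e^{M(A)(t-s)}$ rather than $e^{m(A)(t-s)}$, so this route does not reproduce \eqref{eq:lti-lower} exactly. That is why I would rely on the energy-based \Cref{thm:localized-lower} for the lower bound.

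For the constant-residual case $\delta\equiv\bar\delta$, compute $\int_0^t e^{\alpha(t-s)}\,ds$ by substituting $r=t-s$. This gives $\int_0^t e^{\alpha r}\,dr$, which equals $(e^{\alpha t}-1)/\alpha$ for $\alpha\neq0$ and $t$ for $\alpha=0$, that is, $\Psi_\alpha(t)$. Setting $\alpha=m(A)$ in the lower bound and $\alpha=M(A)$ in the upper bound gives \eqref{eq:lti-lower-constant-delta} and \eqref{eq:lti-upper-constant-delta}.

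There is no substantial obstacle here. The only point needing care is to confirm that the global choice $D=\mathbb{R}^n$ is legitimate, so that the localized theorems apply without any containment check. This holds because the quadratic-form identity in \Cref{lem:linear-one-sided} is valid on all of $\mathbb{R}^n$ and is independent of $t$.
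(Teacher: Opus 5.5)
Your proposal is correct and is essentially the paper's argument. The paper re-runs the energy/integrating-factor proof of \Cref{thm:localized-lower,thm:localized-upper} inline for the error equation $\dot e = Ae-\mathcal{R}_{\hat\varphi}$, using \Cref{lem:linear-one-sided} to obtain $m(A)\|e\|^2\le e^\top A e\le M(A)\|e\|^2$, whereas you invoke those theorems directly with $D=\mathbb{R}^n$, $\ell_D=m(A)$, $\mu_D=M(A)$; the constant-residual step via $\int_0^t e^{\alpha(t-s)}\,ds=\Psi_\alpha(t)$ is the same in both.
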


\begin{proof}
For the linear time-invariant system \(f(t,x)=Ax\), the error
\begin{equation}\label{eq:app-lti-error-def}
e(t):=x(t)-\hat x(t)
\end{equation}
satisfies
\begin{equation}\label{eq:app-lti-error}
\dot e(t)
=
Ae(t)-\mathcal R_{\hat\varphi}(t).
\end{equation}
Because \(e\in C^1(\mathbb{T};\mathbb{R}^n)\), the map \(t\mapsto \|e(t)\|\) is absolutely continuous. Hence, for almost every \(t\in\mathbb{T}\) such that \(e(t)\neq 0\),
\begin{equation}\label{eq:app-lti-norm-derivative}
\frac{d}{dt}\|e(t)\|
=
\frac{e(t)^\top A e(t)}{\|e(t)\|}
-
\frac{e(t)\cdot \mathcal R_{\hat\varphi}(t)}{\|e(t)\|}.
\end{equation}
Since
\begin{equation}\label{eq:app-lti-sym-part}
e(t)^\top A e(t)=e(t)^\top (\sym A)e(t),
\end{equation}
\Cref{lem:linear-one-sided} yields
\begin{equation}\label{eq:app-lti-sym-bounds}
m(A)\|e(t)\|^2
\le
e(t)^\top A e(t)
\le
M(A)\|e(t)\|^2.
\end{equation}
Moreover, by Cauchy--Schwarz and \eqref{eq:lti-residual-bound},
\begin{equation}\label{eq:app-lti-residual-bound}
-\|e(t)\|\,\delta(t)
\le
-e(t)\cdot \mathcal R_{\hat\varphi}(t)
\le
\|e(t)\|\,\delta(t).
\end{equation}
Combining \eqref{eq:app-lti-norm-derivative}, \eqref{eq:app-lti-sym-bounds}, and \eqref{eq:app-lti-residual-bound}, we obtain
\begin{equation}\label{eq:app-lti-lower-diff}
\frac{d}{dt}\|e(t)\|
\ge
m(A)\|e(t)\|-\delta(t)
\qquad \text{for a.e. } t\in\mathbb{T},
\end{equation}
and
\begin{equation}\label{eq:app-lti-upper-diff}
\frac{d}{dt}\|e(t)\|
\le
M(A)\|e(t)\|+\delta(t)
\qquad \text{for a.e. } t\in\mathbb{T}.
\end{equation}
Multiplying \eqref{eq:app-lti-lower-diff} by \(\mathrm e^{-m(A)t}\) and integrating from \(0\) to \(t\) yields \eqref{eq:lti-lower}. Similarly, multiplying \eqref{eq:app-lti-upper-diff} by \(\mathrm e^{-M(A)t}\) and integrating from \(0\) to \(t\) yields \eqref{eq:lti-upper}. If, in addition, \(\delta(t)\equiv \bar\delta\) is constant, then
\begin{equation}\label{eq:app-lti-psi-general}
\int_0^t \mathrm e^{\alpha(t-s)}\,ds
=
\Psi_\alpha(t)
\end{equation}
for every \(\alpha\in\mathbb{R}\), where \(\Psi_\alpha\) is defined in \eqref{eq:PsiM}. Applying \eqref{eq:app-lti-psi-general} with \(\alpha=m(A)\) in \eqref{eq:lti-lower} and with \(\alpha=M(A)\) in \eqref{eq:lti-upper} proves \eqref{eq:lti-lower-constant-delta} and \eqref{eq:lti-upper-constant-delta}.
\end{proof}

\begin{corollary}[LTV specialization]
\label{cor:ltv-specialization}
Consider the linear time-varying system
\begin{equation}\label{eq:ltv-system}
\dot x(t)=A(t)x(t)+b(t),
\end{equation}
where \(A:[0,T]\to\mathbb{R}^{n\times n}\) is continuous. Define
\begin{equation}\label{eq:ltv-mM}
m(t):=\lambda_{\min}\!\bigl(\sym A(t)\bigr),
\quad
M(t):=\lambda_{\max}\!\bigl(\sym A(t)\bigr),
\end{equation}
with
\begin{equation}\label{eq:symAt}
\sym A(t):=\frac{A(t)+A(t)^\top}{2}.
\end{equation}
If
\begin{equation}\label{eq:ltv-residual-bound}
\|\mathcal{R}_{\hat\varphi}(t)\|
\le
\delta(t),
\quad t\in\mathbb{T},
\end{equation}
then the localized nonlinear bounds from \Cref{thm:localized-lower,thm:localized-upper} specialize to
\begin{align}
\|e(t)\|
&\ge
\exp\!\Bigl(\int_0^t m(\tau)\,d\tau\Bigr)\|e(0)\|\nonumber\\
&\quad-
\int_0^t
\exp\!\Bigl(\int_s^t m(\tau)\,d\tau\Bigr)\delta(s)\,ds,
\label{eq:ltv-lower}
\\[0.5ex]
\|e(t)\|
&\le
\exp\!\Bigl(\int_0^t M(\tau)\,d\tau\Bigr)\|e(0)\|\nonumber\\
&\quad+
\int_0^t
\exp\!\Bigl(\int_s^t M(\tau)\,d\tau\Bigr)\delta(s)\,ds.
\label{eq:ltv-upper}
\end{align}
\end{corollary}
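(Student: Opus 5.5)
The plan is to repeat the energy argument from the proof of \Cref{thm:lti-specialization}, replacing the constant integrating factors by time-dependent ones, exactly as indicated in the remark on time-dependent coefficients. The inhomogeneity \(b(t)\) enters both the exact and the approximate equation. With residual \(\mathcal R_{\hat\varphi}(t)=\dot{\hat x}(t)-A(t)\hat x(t)-b(t)\), subtraction cancels \(b\), and the error satisfies
\begin{equation}
\dot e(t)=A(t)e(t)-\mathcal R_{\hat\varphi}(t).
\end{equation}
For each fixed \(t\), \Cref{lem:linear-one-sided} applied to the matrix \(A(t)\) gives
\begin{equation}
m(t)\|z\|^2\le z^\top A(t)z\le M(t)\|z\|^2 .
\end{equation}
So the linear map \(x\mapsto A(t)x+b(t)\) satisfies the one-sided conditions \eqref{eq:lower-osl-thm} and \eqref{eq:upper-osl-thm} on all of \(D=\mathbb R^n\), with time-dependent coefficients \(\ell_D(t)=m(t)\) and \(\mu_D(t)=M(t)\). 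These are continuous, since extremal eigenvalues of the continuous symmetric matrix function \(\sym A(t)\) are continuous (Weyl's inequality).

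\textbf{Step 1 (differential inequalities).} As in \eqref{eq:app-lti-norm-derivative}, for a.e.\ \(t\) with \(e(t)\neq0\) we have
\begin{equation}
\frac{d}{dt}\|e\|=\frac{e^\top A(t)e-e\cdot\mathcal R_{\hat\varphi}}{\|e\|}.
\end{equation}
Using the spectral sandwich and Cauchy--Schwarz with \eqref{eq:ltv-residual-bound}, this gives
\begin{equation}
m(t)\|e\|-\delta(t)\le \frac{d}{dt}\|e\|\le M(t)\|e\|+\delta(t).
\end{equation}

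\textbf{Step 2 (integrating factors).} Multiply the lower inequality by \(\exp(-\int_0^t m)\) and the upper one by \(\exp(-\int_0^t M)\), integrate from \(0\) to \(t\), and multiply back. This gives \eqref{eq:ltv-lower} and \eqref{eq:ltv-upper}, because \(\exp(\int_0^t m)\exp(-\int_0^s m)=\exp(\int_s^t m)\), and likewise for \(M\).

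\textbf{Main obstacle.} The main point needing care is the set where \(e(t)=0\), where \(\|e\|\) need not be differentiable. The paper's proofs treat this only implicitly.

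To make it rigorous, I would work with \(y=\|e\|^2\), which is \(C^1\). It satisfies
\begin{equation}
\dot y=2e^\top A e-2e\cdot\mathcal R_{\hat\varphi}\ge 2m\,y-2\delta\sqrt y .
\end{equation}
On the open set \(\{e\neq0\}\) this reduces to Step 1.

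For the upper bound, a comparison argument suffices. Set \(u_\varepsilon=\sqrt{y+\varepsilon}\) with \(\varepsilon>0\). This function is \(C^1\) everywhere and obeys
\begin{equation}
\dot u_\varepsilon\le M(t)\,u_\varepsilon+\delta+|M(t)|\sqrt\varepsilon .
\end{equation}
Integrate this inequality and let \(\varepsilon\to0\).

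For the lower bound, a symmetric computation gives
\begin{equation}
\dot u_\varepsilon\ge m(t)\,u_\varepsilon-\delta-|m(t)|\sqrt\varepsilon ,
\end{equation}
which is handled the same way.

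This regularization also covers the theorems already proved, since the same argument goes through with constant \(m\) and \(M\).
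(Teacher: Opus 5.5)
Your proposal is correct and follows the paper's proof essentially step for step. Both use the error equation $\dot e=A(t)e-\mathcal R_{\hat\varphi}$, the Rayleigh--Ritz sandwich with $m(t)$ and $M(t)$, Cauchy--Schwarz with $\delta$, and the integrating factors $\exp(-\int_0^t m)$ and $\exp(-\int_0^t M)$. Your $\varepsilon$-regularization $u_\varepsilon=\sqrt{\|e\|^2+\varepsilon}$ is a valid addition that handles the zero set of $e$, which the paper leaves implicit. (A shorter route also works: $\|e\|$ is Lipschitz, and at any differentiability point with $e(t)=0$ it attains a minimum, so its derivative vanishes and both differential inequalities hold trivially there.)
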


\begin{proof}
For the linear time-varying system \eqref{eq:ltv-system}, define
\begin{equation}\label{eq:app-ltv-error}
e(t):=x(t)-\hat x(t).
\end{equation}
Since the residual is
\begin{equation}\label{eq:app-ltv-residual}
\mathcal R_{\hat\varphi}(t)
=
\dot{\hat x}(t)-A(t)\hat x(t)-b(t),
\end{equation}
the error satisfies
\begin{equation}\label{eq:app-ltv-error-dynamics}
\dot e(t)
=
A(t)e(t)-\mathcal R_{\hat\varphi}(t).
\end{equation}
For almost every \(t\in\mathbb{T}\) such that \(e(t)\neq 0\),
\begin{equation}\label{eq:app-ltv-norm-derivative}
\frac{d}{dt}\|e(t)\|
=
\frac{e(t)^\top A(t)e(t)}{\|e(t)\|}
-
\frac{e(t)\cdot \mathcal R_{\hat\varphi}(t)}{\|e(t)\|}.
\end{equation}
Since
\begin{equation}\label{eq:app-ltv-sym-part}
e(t)^\top A(t)e(t)
=
e(t)^\top (\sym A(t))e(t),
\end{equation}
the Rayleigh--Ritz characterization of the extremal eigenvalues of \(\sym A(t)\) gives
\begin{equation}\label{eq:app-ltv-sym-bounds}
m(t)\|e(t)\|^2
\le
e(t)^\top A(t)e(t)
\le
M(t)\|e(t)\|^2.
\end{equation}
Also, by Cauchy--Schwarz and \eqref{eq:ltv-residual-bound},
\begin{equation}\label{eq:app-ltv-residual-estimate}
-\|e(t)\|\,\delta(t)
\le
-e(t)\cdot \mathcal R_{\hat\varphi}(t)
\le
\|e(t)\|\,\delta(t).
\end{equation}
Therefore,
\begin{equation}\label{eq:app-ltv-lower-diff}
\frac{d}{dt}\|e(t)\|
\ge
m(t)\|e(t)\|-\delta(t)
\qquad \text{for a.e. } t\in\mathbb{T},
\end{equation}
and
\begin{equation}\label{eq:app-ltv-upper-diff}
\frac{d}{dt}\|e(t)\|
\le
M(t)\|e(t)\|+\delta(t)
\qquad \text{for a.e. } t\in\mathbb{T}.
\end{equation}
Define
\begin{equation}\label{eq:app-ltv-lambda}
\Lambda_m(t):=\int_0^t m(\tau)\,d\tau,
\qquad
\Lambda_M(t):=\int_0^t M(\tau)\,d\tau.
\end{equation}
Multiplying \eqref{eq:app-ltv-lower-diff} by \(\mathrm e^{-\Lambda_m(t)}\) and integrating from \(0\) to \(t\) yields \eqref{eq:ltv-lower}. Similarly, multiplying \eqref{eq:app-ltv-upper-diff} by \(\mathrm e^{-\Lambda_M(t)}\) and integrating gives \eqref{eq:ltv-upper}.
\end{proof}

\begin{remark}[Linear analogue of the nonlinear constants]
\label{rem:linear-three-constants}
The linear case provides the exact analogue of the nonlinear constants \(\ell_D\), \(\mu_D\), and \(L_D\). More precisely,
\[
\ell_D \longleftrightarrow m(A),
\qquad
\mu_D \longleftrightarrow M(A),
\qquad
L_D \longleftrightarrow \|A\|_2.
\]
Thus, in the linear setting, the sharp lower one-sided Lipschitz constant is \(m(A)\), the sharp upper one-sided Lipschitz constant is \(M(A)\), and the sharp Euclidean Lipschitz constant is \(\|A\|_2\). 
\end{remark}

\begin{remark}[Relation to Lipschitz-based upper bounds]
\label{rem:comparison-with-upper-bounds}
In the nonlinear setting, the upper certificate is governed by an upper one-sided Lipschitz constant \(\mu_D\), whereas earlier upper-bound results are formulated in terms of a Lipschitz constant \(L_D\). Since every Lipschitz constant is also an admissible upper one-sided Lipschitz constant, the one-sided framework is weaker as an assumption and can yield sharper upper certificates whenever \(\mu_D<L_D\).

The linear case is the exact analogue of this comparison. For \(f(x)=Ax\), the sharp upper one-sided Lipschitz constant is
\begin{equation}
M(A)=\lambda_{\max}\!\Bigl(\frac{A+A^\top}{2}\Bigr),
\end{equation}
whereas the sharp Euclidean Lipschitz constant is \(\|A\|_2\). Indeed, for every \(z\in\mathbb{R}^n\),
\begin{equation}
z^\top (\sym A)z
=
(Az)\cdot z
\le
\|Az\|\,\|z\|
\le
\|A\|_2\,\|z\|^2,
\end{equation}
so that
\begin{equation}
M(A)\le \|A\|_2.
\end{equation}
In general, the inequality is strict for non-normal matrices.

It is important to distinguish between the sharp constants and merely admissible ones. In the upper one-sided framework, any constant \(\bar\lambda\ge M(A)\) is admissible in \eqref{eq:lti-upper}. By contrast, in the Euclidean Lipschitz-based framework, admissibility requires \(L\ge \|A\|_2\). Therefore, whenever \(M(A)<\|A\|_2\), there exist intermediate values
\begin{equation}
\bar\lambda\in [M(A),\|A\|_2)
\end{equation}
which are valid upper one-sided constants but are \emph{not} valid Lipschitz constants. Thus, the upper one-sided formulation is never worse than the Lipschitz-based one when both are compared through their sharp Euclidean constants, and it can be strictly sharper.

By contrast, the lower certificate in the nonlinear case depends on any admissible lower one-sided Lipschitz constant \(\ell_D\). When \(\ell_D>0\), this is a strong monotonicity constant. In the linear case, the sharp largest admissible lower one-sided Lipschitz constant is
\begin{equation}
m(A)=\lambda_{\min}\!\Bigl(\frac{A+A^\top}{2}\Bigr).
\end{equation}
Equivalently, every constant \(\ell\le m(A)\) is admissible in the linear lower estimate, while \(m(A)\) is the optimal choice.
\end{remark}

\begin{remark}[Geometric enclosure of the exact solution]
\label{rem:set-valued-enclosure}
Whenever computable functions \(\underline E(t)\) and \(\overline E(t)\) satisfy
\begin{equation}\label{eq:two-sided-error-band}
\underline E(t)\le \|e(t)\|\le \overline E(t),
\end{equation}
the exact solution is localized to the set
\begin{equation}\label{eq:Xcert}
\mathcal{X}_{\mathrm{cert}}(t)
:=
\bigl\{
z\in\mathbb{R}^n:
\underline E(t)\le \|z-\hat x(t)\|\le \overline E(t)
\bigr\}.
\end{equation}
Equivalently,
\begin{equation}\label{eq:Xcert-ball-form}
\mathcal{X}_{\mathrm{cert}}(t)
=
\overline B_2\bigl(\hat x(t),\overline E(t)\bigr)
\setminus
B_2\bigl(\hat x(t),\underline E(t)\bigr).
\end{equation}
Thus, the lower and upper certificates do not determine the exact solution uniquely, but they localize it to a computable Euclidean enclosure centered at the PINN approximation.

Since the linear results above are global on \(\mathbb{R}^n\), no additional state-space restriction is required. If independent prior information yields a relevant admissible domain \(D\subset\mathbb{R}^n\), one may further refine the location set to \(\mathcal{X}_{\mathrm{cert}}(t)\cap D\).

For \(n=1\), the enclosure is the union of two closed intervals, or a single interval if \(\underline E(t)=0\). For \(n=2\), it is a closed annulus; for \(n=3\), a closed spherical shell; and for general \(n\ge 2\), a closed annular region in \(\mathbb{R}^n\).
\end{remark}

\subsection{Auxiliary numerical certification by the classical fourth-order Runge--Kutta method}
\label{sec:rk4-certification}

To evaluate the convolution terms appearing in the constant-coefficient lower and upper certificates, we use a single numerical integration device throughout the paper: the classical fourth-order Runge--Kutta method (RK4). This yields a simpler and more uniform certification pipeline.

We consider a constant certification coefficient \(a\in\mathbb{R}\) on a fixed certified time window \([0,T_{\mathrm{cert}}]\). This covers directly the nonlinear constant-coefficient certificates with \(a=\ell_D\) or \(a=\mu_D\), and the linear time-invariant certificates with \(a=m(A)\) or \(a=M(A)\). For such a constant \(a\), define
\begin{equation}
I_a(t,\delta):=\int_0^t e^{a(t-s)}\delta(s)\,ds,
\qquad t\in[0,T_{\mathrm{cert}}].
\end{equation}
This quantity appears in the lower and upper certificates depending on whether one uses a lower coefficient \(a=\ell_D\) or \(a=m(A)\), or an upper coefficient \(a=\mu_D\) or \(a=M(A)\).

The key observation is that \(I_a(\cdot,\delta)\) is the unique solution of the scalar auxiliary initial value problem
\begin{equation}\label{eq:aux-rk4}
\begin{cases}
\dot z(t)=a z(t)+\delta(t), \qquad t\in[0,T_{\mathrm{cert}}],\\
z(0)=0.
\end{cases}
\end{equation}
Hence the numerical evaluation of the constant-coefficient certification integrals reduces to the numerical solution of \eqref{eq:aux-rk4}.

Let \(0=t_0<t_1<\cdots<t_N=T_{\mathrm{cert}}\) be a uniform mesh with step size
\begin{equation}
h=\frac{T_{\mathrm{cert}}}{N}.
\end{equation}
Starting from \(z_0=0\), the classical RK4 scheme is
\begin{align*}
k_1^{(n)} &= a z_n + \delta(t_n),\\
k_2^{(n)} &= a\Bigl(z_n+\frac{h}{2}k_1^{(n)}\Bigr)
            +\delta\Bigl(t_n+\frac{h}{2}\Bigr),\\
k_3^{(n)} &= a\Bigl(z_n+\frac{h}{2}k_2^{(n)}\Bigr)
            +\delta\Bigl(t_n+\frac{h}{2}\Bigr),\\
k_4^{(n)} &= a\bigl(z_n+h\,k_3^{(n)}\bigr)
            +\delta(t_n+h),
\end{align*}
and
\begin{equation}
z_{n+1}
=
z_n+\frac{h}{6}\Bigl(k_1^{(n)}+2k_2^{(n)}+2k_3^{(n)}+k_4^{(n)}\Bigr).
\end{equation}
We denote the corresponding RK4 approximation at the mesh points by
\begin{equation}
\widehat I_{a,N}(t_n,\delta):=z_n.
\end{equation}

\begin{theorem}[Computable RK4 remainder for the certification IVP]
\label{thm:rk4-computable}
Let \(z\) solve \eqref{eq:aux-rk4}, and assume that \(\delta\in C^4([0,T_{\mathrm{cert}}])\). For \(q=0,1,2,3,4\), define
\begin{equation}
D_q(T_{\mathrm{cert}})
:=
\max_{t\in[0,T_{\mathrm{cert}}]}
|\delta^{(q)}(t)|.
\end{equation}
Set
\begin{equation}
\Xi_a(T_{\mathrm{cert}})
:=
\begin{cases}
\dfrac{e^{|a|T_{\mathrm{cert}}}-1}{|a|}, & a\neq 0,\\[0.8em]
T_{\mathrm{cert}}, & a=0,
\end{cases}
\end{equation}
\begin{equation}
\overline z(T_{\mathrm{cert}})
:=
\Xi_a(T_{\mathrm{cert}})\,D_0(T_{\mathrm{cert}}),
\end{equation}
and
\begin{equation}
K_5(T_{\mathrm{cert}})
:=
|a|^5\overline z(T_{\mathrm{cert}})
+
\sum_{q=0}^{4}|a|^{4-q}D_q(T_{\mathrm{cert}}).
\end{equation}
Then the RK4 approximation satisfies the explicit global bound
\begin{equation}
|I_a(T_{\mathrm{cert}},\delta)-\widehat I_{a,N}(T_{\mathrm{cert}},\delta)|
\le
E_{\mathrm{RK4}}^{a}(T_{\mathrm{cert}},N),
\end{equation}
where
\begin{equation}
E_{\mathrm{RK4}}^{a}(T_{\mathrm{cert}},N)
:=
\frac{\Xi_a(T_{\mathrm{cert}})}{90}\,
K_5(T_{\mathrm{cert}})\,h^4.
\end{equation}
Equivalently,
\begin{equation}
|I_a(T_{\mathrm{cert}},\delta)-\widehat I_{a,N}(T_{\mathrm{cert}},\delta)|
\le
C_{\mathrm{RK4}}^{a}(T_{\mathrm{cert}})\,h^4,
\end{equation}
with
\begin{equation}
C_{\mathrm{RK4}}^{a}(T_{\mathrm{cert}})
:=
\frac{\Xi_a(T_{\mathrm{cert}})}{90}\,
K_5(T_{\mathrm{cert}}).
\end{equation}
\end{theorem}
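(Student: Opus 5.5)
The proof follows the classical local-error/stability argument for one-step methods, applied to the scalar linear problem \eqref{eq:aux-rk4}. The constants are made explicit by exploiting the linear structure.

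\textbf{Step 1: a priori bound on $z$ and its derivatives.} Since $z(t)=I_a(t,\delta)$, we have $|z(t)|\le\int_0^t e^{|a|(t-s)}D_0\,ds\le \Xi_a(T_{\mathrm{cert}})D_0=\overline z(T_{\mathrm{cert}})$. Differentiating $\dot z=az+\delta$ repeatedly gives
\[
z^{(5)}=a^5 z+\sum_{q=0}^{4}a^{4-q}\delta^{(q)}.
\]
This requires $\delta\in C^4$, and it yields $\max_t|z^{(5)}(t)|\le K_5(T_{\mathrm{cert}})$. This is exactly why $K_5$ has the stated form.

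\textbf{Step 2: local truncation error.} Next we bound the local error of one RK4 step started from the exact value $z(t_n)$. For the linear nonautonomous problem $\dot z=az+\delta(t)$, the RK4 increment can be written explicitly. Expanding $z(t_n+h)$ and the RK4 update in Taylor series around $t_n$, the terms through order $h^4$ agree by the order conditions. The remainder is a combination of fifth derivatives of $z$, and cross terms $a^{j}\delta^{(4-j)}$ also enter, all of which are controlled via Step 1. The target is $|\tau_n|\le \tfrac{1}{90}K_5h^5$. The constant $1/90$ is the one I would aim for: it appears in the Simpson-type remainder $h^5 f^{(4)}/2880$ on half-steps, since $2880=90\cdot 32$. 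Concretely, I would write the local error using the Peano kernel representation of the RK4 functional, which annihilates polynomials of degree $\le4$ in the linear case, and bound the $L^1$ norm of the kernel.

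\textbf{Step 3: error propagation.} The RK4 step map is affine, $z_{n+1}=R(ah)z_n+\beta_n$, with stability function $R(ah)=\sum_{k=0}^4 (ah)^k/k!$, so $|R(ah)|\le e^{|a|h}$. With $\varepsilon_n:=z(t_n)-z_n$, we get $\varepsilon_{n+1}=R(ah)\varepsilon_n+\tau_n$ and $\varepsilon_0=0$. Hence
\[
|\varepsilon_N|\le\sum_{n=0}^{N-1}e^{|a|(N-1-n)h}|\tau_n|\le \frac{K_5h^5}{90}\sum_{j=0}^{N-1}e^{|a|jh}.
\]
The last sum, times $h$, is bounded by $\int_0^{T_{\mathrm{cert}}}e^{|a|s}ds=\Xi_a(T_{\mathrm{cert}})$. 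This requires a left-endpoint Riemann sum of an increasing function, which is fine. The result is $|\varepsilon_N|\le \Xi_a K_5h^4/90$, and the equivalent formulation with $C^a_{\mathrm{RK4}}$ is just a relabeling.

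\textbf{Main obstacle.} The main obstacle is Step 2: obtaining a clean local error constant $1/90$ in terms of the single quantity $K_5$. The RK4 stages mix $a^k$ powers with $\delta$ evaluated at midpoints, and the naive Taylor remainder produces several terms with different constants. First I would try to rewrite the stage values as exact-solution evaluations plus perturbations, exploiting linearity. If a sharp $1/90$ does not come out directly, a fallback is to bound each term by $|a|^{5-q}D_{q}$-type quantities and verify that the coefficients sum to at most $1/90$. This may require slightly enlarging constants; in that case I would check whether the paper's $K_5$ already absorbs them.
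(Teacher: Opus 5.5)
Your Steps 1 and 3 are exactly the paper's argument: the a priori bound $|z|\le\overline z$, the formula for $z^{(5)}$, and propagation through $R(ah)$ with $|R(ah)|\le e^{|a|h}$ and the Riemann-sum bound by $\Xi_a$. The gap is Step 2, which carries all the content, and your primary plan for it cannot work. Viewed as a functional of $z$, the one-step RK4 defect for $\dot z=az+\delta$ does \emph{not} annihilate low-degree polynomial solutions when $a\neq0$. Take $z(t)=t^2$, i.e.\ $\delta(t)=2t-at^2$, and one step from $t_n=0$. Then $k_1=0$, $k_2=h-\tfrac{ah^2}{4}$, $k_3=h+\tfrac{ah^2}{4}-\tfrac{a^2h^3}{8}$ and $k_4=2h+\tfrac{a^2h^3}{4}-\tfrac{a^3h^4}{8}$, so $z(h)-z_1=a^3h^5/48\neq0$ although $z^{(5)}\equiv0$. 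Hence no estimate $|\tau_n|\le Ch^5\max|z^{(5)}|$ exists, and the order conditions do not turn the remainder into ``a combination of fifth derivatives of $z$''. The paper's proof simply asserts $|\tau_n|\le\frac{h^5}{90}\max|z^{(5)}|$, so appealing to it does not close your gap either: that inequality is false. The stated theorem survives only because $K_5$ bounds the five terms of $z^{(5)}=a^5z+\sum_q a^{4-q}\delta^{(q)}$ separately, not $|z^{(5)}|$ itself.

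What works is to expand the defect in the data $(w,\delta)$, with $w=z(t_n)$, rather than in $z$. With $x=ah$, one step is
\[
\Phi_h(t_n,w)=R(x)w+\frac h6\Bigl[\bigl(1+x+\tfrac{x^2}{2}+\tfrac{x^3}{4}\bigr)\delta(t_n)+\bigl(4+2x+\tfrac{x^2}{2}\bigr)\delta\bigl(t_n+\tfrac h2\bigr)+\delta(t_n+h)\Bigr],
\]
while $z(t_{n+1})=e^{x}w+\int_{t_n}^{t_{n+1}}e^{a(t_{n+1}-s)}\delta(s)\,ds$. Collecting powers of $a$ gives $\tau_n=(e^x-R(x))w+\sum_{j\ge0}a^jE_j(\delta)$. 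Here $E_j$ is the error of the $a^j$-part of the RK4 rule as a quadrature for $\frac1{j!}\int_{t_n}^{t_{n+1}}(t_{n+1}-s)^j\delta(s)\,ds$. For $j\le3$ this rule is exact on polynomials of degree $3-j$ ($j=0$ is Simpson), and for $j\ge4$ it is zero. Peano-kernel bounds for $j=0,\dots,4$ then give:
\begin{itemize}
\item $\frac{h^5}{2880}D_4$ for $j=0$;
\item $\frac{|a|h^5}{720}D_3$ for $j=1$;
\item at most $\frac{|a|^2h^5}{360}D_2$ for $j=2$ (this kernel changes sign, so use its $L^1$ norm);
\item $\frac{|a|^3h^5}{120}D_1$ for $j=3$;
\item $\frac{|a|^4h^5}{120}D_0$ for $j=4$.
\end{itemize}
These are precisely the $|a|^{4-q}D_q$ terms of $K_5$; your fallback's exponent $5-q$ is off by one. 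The remaining pieces, $(e^x-R(x))w$ and the $j\ge5$ terms, are not higher-order small, since nothing restricts $|a|h$. They have to be absorbed by $\frac{h^5}{90}|a|^5\overline z$, using $|w|\le\Xi_a(t_n)D_0$, $\overline z\ge\Xi_a(t_{n+1})D_0$, and $(5+m)!\ge90\,m!$ termwise in the exponential series. This yields $|\tau_n|\le\frac{h^5}{90}K_5$, after which your Step 3 completes the proof.
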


\begin{proof}
Let \(z\) solve
\begin{equation}
\dot z(t)=a z(t)+\delta(t),
\qquad
z(0)=0.
\end{equation}
Since \(\delta\in C^4([0,T_{\mathrm{cert}}])\), it follows from the differential equation that \(z\in C^5([0,T_{\mathrm{cert}}])\). By variation of constants,
\begin{equation}
z(t)=\int_0^t e^{a(t-s)}\delta(s)\,ds.
\end{equation}
Hence for all $t\in[0,T_{\mathrm{cert}}]$,
\begin{align}
|z(t)|
&\le
\int_0^t e^{|a|(t-s)}|\delta(s)|\,ds\nonumber\\
&\le
\Xi_a(T_{\mathrm{cert}})\,D_0(T_{\mathrm{cert}})
=
\overline z(T_{\mathrm{cert}}).
\end{align}
Repeated differentiation of the auxiliary equation gives
\begin{align}
z^{(5)}(t)
&=
a^5 z(t)
+a^4\delta(t)
+a^3\delta'(t)\nonumber\\
&\quad+a^2\delta''(t)
+a\delta'''(t)
+\delta^{(4)}(t).
\end{align}
Therefore,
\begin{equation}
|z^{(5)}(t)|
\le
|a|^5\overline z(T_{\mathrm{cert}})
+
\sum_{q=0}^4 |a|^{4-q}D_q(T_{\mathrm{cert}})
=
K_5(T_{\mathrm{cert}}).
\end{equation}

Let \(z_n\) be the RK4 approximation on the uniform mesh \(t_n=nh\), where \(h=T_{\mathrm{cert}}/N\). For one RK4 step, the local truncation error satisfies
\begin{align*}
|z(t_{n+1})-\Phi_h(t_n,z(t_n))|
&\le
\frac{h^5}{90}
\max_{s\in[t_n,t_{n+1}]} |z^{(5)}(s)|\\
&\le
\frac{h^5}{90}K_5(T_{\mathrm{cert}}).
\end{align*}
Set
\begin{equation}
e_n:=z(t_n)-z_n.
\end{equation}
Since \(z_0=z(0)=0\), we have \(e_0=0\). For the scalar linear equation, the RK4 propagation factor is
\begin{equation}
R(\zeta)
=
1+\zeta+\frac{\zeta^2}{2}
+\frac{\zeta^3}{6}
+\frac{\zeta^4}{24},
\end{equation}
so that
\begin{equation}
e_{n+1}=R(a h)e_n+\tau_n,
\qquad
|\tau_n|\le \frac{h^5}{90}K_5(T_{\mathrm{cert}}).
\end{equation}
Since \(|R(a h)|\le e^{|a|h}\), iteration yields
\begin{equation}
|e_N|
\le
\frac{h^5}{90}K_5(T_{\mathrm{cert}})
\sum_{j=0}^{N-1}e^{|a|jh}.
\end{equation}
Using
\begin{equation}
h\sum_{j=0}^{N-1}e^{|a|jh}
\le
\int_0^{T_{\mathrm{cert}}} e^{|a|s}\,ds
=
\Xi_a(T_{\mathrm{cert}}),
\end{equation}
we obtain
\begin{equation}
|e_N|
\le
\frac{\Xi_a(T_{\mathrm{cert}})}{90}
K_5(T_{\mathrm{cert}})\,h^4.
\end{equation}
Since \(z(T_{\mathrm{cert}})=I_a(T_{\mathrm{cert}},\delta)\) and \(z_N=\widehat I_{a,N}(T_{\mathrm{cert}},\delta)\), this proves the claim.
\end{proof}

\begin{remark}[Use in lower and upper certification]
\Cref{thm:rk4-computable} applies to every auxiliary equation of the form \eqref{eq:aux-rk4}. Thus the same RK4 remainder formula can be used for both lower and upper constant-coefficient certificates. Only the coefficient \(a\) changes:
\begin{equation}
a=
\begin{cases}
\ell_D \text{ or } m(A), & \text{for lower certificates},\\
\mu_D \text{ or } M(A), & \text{for upper certificates}.
\end{cases}
\end{equation}
\end{remark}

\begin{corollary}[Certificates with explicit RK4 remainder]
\label{cor:rk4-certified}
Let \(t\in(0,T_{\mathrm{cert}}]\), and let \(\widehat I_{a,N}(t,\delta)\) denote the RK4 approximation of \(I_a(t,\delta)\) computed on a uniform mesh of \([0,t]\). Let \(E_{\mathrm{RK4}}^{a}(t,N)\) be the corresponding remainder from \Cref{thm:rk4-computable}, with \(T_{\mathrm{cert}}\) replaced by \(t\). Then
\begin{equation}
I_a(t,\delta)
\le
\widehat I_{a,N}(t,\delta)+E_{\mathrm{RK4}}^{a}(t,N),
\end{equation}
and
\begin{equation}
I_a(t,\delta)
\ge
\widehat I_{a,N}(t,\delta)-E_{\mathrm{RK4}}^{a}(t,N).
\end{equation}
Consequently, for constant lower and upper coefficients \(\ell_D\) and \(\mu_D\), the certified bounds become
\begin{equation}
\|e(t)\|
\ge
e^{\ell_D t}\|e(0)\|
-
\widehat I_{\ell_D,N}(t,\delta)
-
E_{\mathrm{RK4}}^{\ell_D}(t,N),
\end{equation}
and
\begin{equation}
\|e(t)\|
\le
e^{\mu_D t}\|e(0)\|
+
\widehat I_{\mu_D,N}(t,\delta)
+
E_{\mathrm{RK4}}^{\mu_D}(t,N).
\end{equation}
Analogous formulas hold in the linear time-invariant case with \(\ell_D=m(A)\) and \(\mu_D=M(A)\).
\end{corollary}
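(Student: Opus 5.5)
The plan is to derive the corollary directly from \Cref{thm:rk4-computable} and the nonlinear certificates in \Cref{thm:localized-lower,thm:localized-upper}. There are three steps.

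\textbf{Step 1: the two-sided bound on the integral.} \Cref{thm:rk4-computable} is stated on a generic window \([0,T_{\mathrm{cert}}]\). Apply it with \(T_{\mathrm{cert}}\) replaced by the given \(t\in(0,T_{\mathrm{cert}}]\), using the uniform mesh of \([0,t]\) with step \(h=t/N\). This is legitimate because \(\delta\in C^4([0,T_{\mathrm{cert}}])\) restricts to \(\delta\in C^4([0,t])\), and all constants \(D_q\), \(\Xi_a\), \(\overline z\), \(K_5\) are then formed on \([0,t]\). The theorem gives
\[
|I_a(t,\delta)-\widehat I_{a,N}(t,\delta)|\le E_{\mathrm{RK4}}^{a}(t,N).
\]
Unpacking the absolute value yields both inequalities
\[
\widehat I_{a,N}(t,\delta)-E_{\mathrm{RK4}}^{a}(t,N)\le I_a(t,\delta)\le \widehat I_{a,N}(t,\delta)+E_{\mathrm{RK4}}^{a}(t,N).
\]

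\textbf{Step 2: substitution into the certificates.} By definition,
\[
\int_0^t e^{\ell_D(t-s)}\delta(s)\,ds=I_{\ell_D}(t,\delta).
\]
The integral term enters \eqref{eq:error-lower-localized} with a minus sign. We therefore replace it by the upper bound from Step 1 with \(a=\ell_D\), which can only decrease the right-hand side, so the inequality is preserved.

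For the upper certificate, the term \(I_{\mu_D}(t,\delta)\) enters \eqref{eq:error-upper-localized} with a plus sign. We replace it by its upper bound from Step 1 with \(a=\mu_D\), which can only increase the right-hand side.

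The hypotheses of \Cref{thm:localized-lower,thm:localized-upper} are assumed, namely containment in \(D\), the one-sided conditions, and the residual majorant. The certified bounds then follow.

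\textbf{Step 3: the linear time-invariant case.} The same substitution applies to \eqref{eq:lti-lower} and \eqref{eq:lti-upper} from \Cref{thm:lti-specialization}, now with \(a=m(A)\) and \(a=M(A)\).

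\textbf{Main obstacle.} The argument is essentially bookkeeping, so the only delicate point is Step 1. One must check that \Cref{thm:rk4-computable} is genuinely applied on \([0,t]\) and not on \([0,T_{\mathrm{cert}}]\). The reason is that the RK4 value \(\widehat I_{a,N}(t,\delta)\) is computed on a mesh of \([0,t]\) and is not the value at an intermediate node of the \([0,T_{\mathrm{cert}}]\) mesh.

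This restriction costs nothing. Every quantity entering \(E_{\mathrm{RK4}}^{a}\) is monotone in the window length. Hence the constants could even be bounded by their \([0,T_{\mathrm{cert}}]\) counterparts, with \(h\) still equal to \(t/N\), if a uniform remainder is preferred.
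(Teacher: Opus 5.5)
Your proposal is correct and is essentially the paper's proof. The absolute RK4 bound of \Cref{thm:rk4-computable} on the window $[0,t]$ gives $I_a(t,\delta)\le \widehat I_{a,N}(t,\delta)+E^{a}_{\mathrm{RK4}}(t,N)$, and this is substituted into the lower certificate (where the convolution term carries a minus sign) and into the upper certificate (where it carries a plus sign); your explicit check that the theorem is applied on $[0,t]$ with $h=t/N$ usefully spells out what the paper encodes only in the phrase ``with $T_{\mathrm{cert}}$ replaced by $t$''.
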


\begin{proof}
The absolute RK4 error bound implies
\begin{equation}
I_a(t,\delta)
\le
\widehat I_{a,N}(t,\delta)+E_{\mathrm{RK4}}^{a}(t,N).
\end{equation}
For the lower certificate, the convolution term enters with a minus sign. Therefore,
\begin{equation}
-I_a(t,\delta)
\ge
-\widehat I_{a,N}(t,\delta)-E_{\mathrm{RK4}}^{a}(t,N).
\end{equation}
Substituting this into the analytical lower certificate gives the stated lower bound.

For the upper certificate, the convolution term enters with a plus sign, so the same upper estimate gives directly
\begin{equation}
I_a(t,\delta)
\le
\widehat I_{a,N}(t,\delta)+E_{\mathrm{RK4}}^{a}(t,N).
\end{equation}
Substituting this into the analytical upper certificate gives the stated upper bound.
\end{proof}

\begin{remark}[Fully rigorous versus conditionally rigorous certification]
\Cref{thm:rk4-computable} is fully rigorous if the derivative suprema \(D_q(T_{\mathrm{cert}})\) are themselves certified analytically or by verified differentiation. If the quantities \(D_q(T_{\mathrm{cert}})\) are only estimated numerically from sampled data or from an unconstrained smooth fit, then the resulting RK4 remainder should be described as conditionally rigorous.
\end{remark}

\paragraph{Choosing the RK4 mesh width.}
Fix a certified time window \([0,T_{\mathrm{cert}}]\), a tolerance parameter \(\eta\in(0,1)\), and a positive reference scale \(S_*>0\). To guarantee that the numerical post-processing error remains below the fraction \(\eta\) of that scale, it is sufficient to impose
\begin{equation}
E_{\mathrm{RK4}}^{a}(T_{\mathrm{cert}},N)\le \eta S_*.
\end{equation}
Since
\begin{equation}
E_{\mathrm{RK4}}^{a}(T_{\mathrm{cert}},N)
=
C_{\mathrm{RK4}}^{a}(T_{\mathrm{cert}})
\left(\frac{T_{\mathrm{cert}}}{N}\right)^4,
\end{equation}
it suffices to choose
\begin{equation}
N
\ge
\max\left\{
1,\,
\left\lceil
T_{\mathrm{cert}}
\left(
\frac{C_{\mathrm{RK4}}^{a}(T_{\mathrm{cert}})}
{\eta S_*}
\right)^{1/4}
\right\rceil
\right\}.
\end{equation}
If both lower and upper certificates are computed on the same time window, one may use
\begin{equation}
N=\max\{N_{\ell_D},N_{\mu_D}\}
\end{equation}
in the nonlinear constant-coefficient case, or
\begin{equation}
N=\max\{N_{m(A)},N_{M(A)}\}
\end{equation}
in the linear time-invariant case.

\begin{remark}[Practical adaptive choice of the reference scale]
For implementation, one may choose the reference scale from a pilot certificate, for example
\begin{equation}
S_*:=\max\left\{
\inf_{t\in[0,T_{\mathrm{cert}}]}
\bigl(\overline E_{\mathrm{pilot}}(t)-\underline E_{\mathrm{raw,pilot}}(t)\bigr),
\,10^{-12}
\right\}.
\end{equation}
This makes the RK4 post-processing error negligible relative to the final certified band width, while keeping the a priori mesh rule explicit.
\end{remark}

\section{Application to Physics-Informed Neural Networks}
\label{Sec 4}

PINNs incorporate the governing differential equation into the training objective through residual minimization \cite{raissi2019physics,karniadakis2021physicsinformed}. In the present setting, the aim is to approximate the local flow map
\begin{equation}
\varphi:\mathbb{T}\times D_{\mathrm{adm}}\to D
\end{equation}
introduced in \Cref{Sec 2} by a neural network
\begin{equation}
\hat{\varphi}_\theta:\mathbb{T}\times D_{\mathrm{adm}}\to\mathbb{R}^n,
\end{equation}
where \(\theta\in\mathbb{R}^k\) denotes the trainable parameter vector.

Since exact solution data are typically unavailable, training is based on collocation in time and initial-value space. Let
\begin{equation}
Y_{\mathrm{coll}}\subset \mathbb{T}\times D_{\mathrm{adm}}
\end{equation}
be a finite set of collocation points \(y=(t,\xi)\). The physics loss is then defined by
\begin{equation}
\mathcal L_{\mathrm{physics}}(\theta)
:=
\frac{1}{|Y_{\mathrm{coll}}|}
\sum_{(t,\xi)\in Y_{\mathrm{coll}}}
\left\|
\partial_t \hat{\varphi}_\theta(t,\xi)
-
f\bigl(t,\hat{\varphi}_\theta(t,\xi)\bigr)
\right\|^2.
\end{equation}
This loss penalizes violations of the ODE and is the training analogue of the residual quantity appearing in the \emph{a posteriori} certification results.

For initial-value problems, two enforcement strategies are common.

\medskip
\noindent\textbf{Soft-constrained PINNs.}
In the soft-constrained setting, the initial condition is imposed through an additional penalty term. Given training initial values
\begin{equation}
Y_{\mathrm{ic}}=\{\xi^{(1)},\dots,\xi^{(M)}\}\subset D_{\mathrm{adm}},
\end{equation}
one defines
\begin{equation}
\mathcal L_{\mathrm{initial}}(\theta)
:=
\frac{1}{|Y_{\mathrm{ic}}|}
\sum_{\xi\in Y_{\mathrm{ic}}}
\left\|
\hat{\varphi}_\theta(0,\xi)-\xi
\right\|^2.
\end{equation}
The total loss takes the form
\begin{equation}\label{eq:pinn-soft-loss}
\mathcal L(\theta)
=
\gamma_{\mathrm{physics}}\mathcal L_{\mathrm{physics}}(\theta)
+
\gamma_{\mathrm{initial}}\mathcal L_{\mathrm{initial}}(\theta),
\end{equation}
where \(\gamma_{\mathrm{physics}},\gamma_{\mathrm{initial}}>0\) are weighting parameters.

\medskip
\noindent\textbf{Hard-constrained PINNs.}
In the hard-constrained setting, the network ansatz is chosen so that the initial condition is satisfied exactly by construction \cite{lu2021hardconstraints}. In that case,
\begin{equation}
\hat{\varphi}_\theta(0,\xi)=\xi
\qquad \text{for all } \xi\in D_{\mathrm{adm}},
\end{equation}
and no separate initial-condition penalty is required. The training objective then reduces to the physics loss, possibly combined with additional regularization terms.

\medskip
\noindent\textbf{Connection with the certified bounds.}
After training, let \(\theta^*\) denote the learned parameter vector, and fix an initial value \(x_0\in D_{\mathrm{adm}}\). The corresponding PINN trajectory is
\begin{equation}
\hat{x}(t):=\hat{\varphi}_{\theta^*}(t,x_0),
\end{equation}
with residual
\begin{equation}
\mathcal R_{\hat\varphi}(t)
=
\dot{\hat{x}}(t)-f\bigl(t,\hat{x}(t)\bigr).
\end{equation}
The certification results of \Cref{Sec 3} apply to this trajectory once a computable residual majorant \(\delta\) is available such that
\begin{equation}
\|\mathcal R_{\hat\varphi}(t)\|\le \delta(t),
\qquad t\in\mathbb{T}.
\end{equation}

In the soft-constrained case, the anchoring error
\begin{equation}
\|e(0)\|=\|x_0-\hat{x}(0)\|
\end{equation}
is directly computable from the trained network and enters both the lower and upper certificates. In the hard-constrained case, one has
\begin{equation}
e(0)=0.
\end{equation}
As a consequence, the scalar lower certificate from \Cref{Sec 3} may become trivial after nonnegativity truncation. In the linear setting, this loss of lower information can be repaired by the signed-residual finite-probe certificate in \Cref{thm:signed-residual-probe}. 

Thus, the proposed certification framework is compatible with both common PINN training paradigms. Soft-constrained PINNs naturally provide a computable initial mismatch for the scalar two-sided formulas. Hard-constrained PINNs may require directional residual information, such as the finite-probe certificate in the linear case, to obtain nontrivial lower diagnostics.

\section{Certificate-Informed Training via Upper Error Bounds}
\label{Sec 5}

The certificates above are primarily \emph{a posteriori} evaluation tools. Nevertheless, the upper certificate can also be used constructively during training. This use must be separated from lower-bound evaluation. The upper estimator is monotone in both the initial mismatch and the residual majorant, whereas the scalar lower estimator can be driven to the trivial value zero or can reward a larger initial mismatch if it is maximized. Thus the lower certificate is kept as a diagnostic, while the upper certificate can be used as an auxiliary regularizer.

Let \(\hat x_\theta\) be a PINN approximation with residual
\begin{equation}
 R_\theta(t)=\dot{\hat x}_\theta(t)-f(t,\hat x_\theta(t)),
\end{equation}
and let \(\delta_\theta(t)\ge \|R_\theta(t)\|\) be a smooth computable residual majorant. For a constant \(c\) equal either to an upper one-sided Lipschitz constant \(\mu_D\) or to a Lipschitz constant \(L_D\), define
\begin{equation}
 U_c^\theta(t)=e^{ct}\|x_0-\hat x_\theta(0)\|+
 \int_0^t e^{c(t-s)}\delta_\theta(s)\,ds.
\label{eq:training-upper-certificate}
\end{equation}

\begin{theorem}[Auxiliary ODE representation of the upper certificate]
\label{thm:auxiliary-training-ode}

For fixed \(\theta\) and fixed \(c\), the function \(U_c^\theta\) in \eqref{eq:training-upper-certificate} is the unique solution of
\begin{equation}
 (z_c^\theta)'(t)=c z_c^\theta(t)+\delta_\theta(t),
 \qquad
 z_c^\theta(0)=\|x_0-\hat x_\theta(0)\|.
\label{eq:training-aux-ode}
\end{equation}
\end{theorem}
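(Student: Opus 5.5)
The plan is a direct verification followed by a uniqueness argument for scalar linear ODEs. It is the same reasoning that identified \(I_a(\cdot,\delta)\) with the solution of \eqref{eq:aux-rk4}, now with a nonzero initial value.

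\textbf{Step 1: initial value.} Fix \(\theta\) and \(c\), and write \(U:=U_c^\theta\) and \(e_0:=\|x_0-\hat x_\theta(0)\|\). Evaluating \eqref{eq:training-upper-certificate} at \(t=0\) gives \(U(0)=e_0\), because the integral over \([0,0]\) vanishes.

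\textbf{Step 2: the differential equation.} Rewrite
\[
U(t)=e^{ct}\Bigl(e_0+\int_0^t e^{-cs}\delta_\theta(s)\,ds\Bigr).
\]
The integrand \(s\mapsto e^{-cs}\delta_\theta(s)\) is continuous, since \(\delta_\theta\) is assumed smooth (continuity is enough). By the fundamental theorem of calculus the bracket is \(C^1\) with derivative \(e^{-ct}\delta_\theta(t)\). The product rule then gives
\[
U'(t)=c\,U(t)+e^{ct}e^{-ct}\delta_\theta(t)=c\,U(t)+\delta_\theta(t).
\]
So \(U\) solves \eqref{eq:training-aux-ode}.

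\textbf{Step 3: uniqueness.} Suppose \(w\) is another \(C^1\) solution, and set \(d:=U-w\). Then \(d'=cd\) and \(d(0)=0\), so
\[
\frac{d}{dt}\bigl(e^{-ct}d(t)\bigr)=e^{-ct}\bigl(d'(t)-c\,d(t)\bigr)=0.
\]
Hence \(e^{-ct}d(t)\) is constant and equal to its value \(0\) at \(t=0\), so \(d\equiv0\). Alternatively, one can invoke Picard--Lindel\"of, since the right-hand side is globally Lipschitz in \(z\) with constant \(|c|\).

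\textbf{Main point to check.} No step is genuinely hard; the only care needed is the regularity of \(\delta_\theta\). Continuity of \(\delta_\theta\) is exactly what makes the fundamental theorem of calculus give a pointwise derivative everywhere on \(\mathbb{T}\), so the solution is classical. If \(\delta_\theta\) were merely integrable, the statement would have to be read in the Carath\'eodory sense, with the ODE holding almost everywhere. Since the definition assumes \(\delta_\theta\) smooth, the classical argument suffices.
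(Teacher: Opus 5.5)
Your proposal is correct and follows essentially the same route as the paper: check the initial value at $t=0$, differentiate the formula to obtain $c\,U_c^\theta+\delta_\theta$, and invoke uniqueness for scalar linear ODEs. The differences are cosmetic: you factor out $e^{ct}$ and use the product rule where the paper differentiates the convolution integral directly, and you prove uniqueness with an integrating factor where the paper only cites the standard theorem.
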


\begin{proof}
Differentiating \eqref{eq:training-upper-certificate} gives
\begin{align}
\frac{d}{dt}U_c^\theta(t)
&=c e^{ct}\|x_0-\hat x_\theta(0)\|+
\delta_\theta(t)+c\int_0^t e^{c(t-s)}\delta_\theta(s)\,ds \\
&=c U_c^\theta(t)+\delta_\theta(t).
\end{align}
The initial condition follows by setting \(t=0\). Uniqueness follows from the standard uniqueness theorem for scalar linear ODEs.
\end{proof}

The auxiliary ODE formulation gives the training loss
\begin{equation}
 \mathcal L_{\mathrm{stage\,2}}(\theta)
 =\mathcal L_{\mathrm{PINN}}(\theta)
 +\eta\,\mathcal L_{\mathrm{cert}}^{\mathrm{up}}(\theta),
 \qquad \eta>0,
\label{eq:stage2-loss}
\end{equation}
where, on a sorted auxiliary grid \(Y=\{t_i\}_{i=0}^N\), one may choose
\begin{equation}
 \mathcal L_{\mathrm{cert}}^{\mathrm{up}}(\theta)
 =\frac{1}{N+1}\sum_{i=0}^N z_c^\theta(t_i)
 \quad\text{or}\quad
 \mathcal L_{\mathrm{cert}}^{\mathrm{up}}(\theta)=z_c^\theta(T).
\end{equation}
The auxiliary state is recomputed from the current network at each training step. It is not an additional neural-network output.

\begin{theorem}[One-sided upper certificates are no larger than Lipschitz certificates]
\label{thm:one-sided-training-comparison}
Assume \(\mu_D\le L_D\). For the same residual majorant \(\delta_\theta\) and the same initial mismatch, the propagated upper certificates satisfy
\begin{equation}
 U_{\mu_D}^\theta(t)\le U_{L_D}^\theta(t),
 \qquad t\in[0,T].
\end{equation}
In particular, the upper one-sided training regularizer is never more conservative than the Lipschitz-based regularizer when both are evaluated with sharp constants.
\end{theorem}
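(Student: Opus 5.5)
The plan is to compare the two closed-form expressions in \eqref{eq:training-upper-certificate} directly, term by term, using only the monotonicity of the exponential. Fix \(\theta\) and \(t\in[0,T]\), and write \(r_0:=\|x_0-\hat x_\theta(0)\|\ge 0\). Then
\begin{equation}
U_{L_D}^\theta(t)-U_{\mu_D}^\theta(t)
=\bigl(e^{L_D t}-e^{\mu_D t}\bigr)r_0
+\int_0^t\bigl(e^{L_D(t-s)}-e^{\mu_D(t-s)}\bigr)\delta_\theta(s)\,ds .
\end{equation}
Since \(\mu_D\le L_D\) and \(t\ge 0\), we have \(\mu_D t\le L_D t\). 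Similarly, for \(0\le s\le t\) we have \(\mu_D(t-s)\le L_D(t-s)\). The exponential is increasing, so both brackets are nonnegative. Multiplying by \(r_0\ge0\) and by \(\delta_\theta(s)\ge\|R_\theta(s)\|\ge0\), and then integrating over a nonnegative interval, gives a nonnegative total. This proves \(U_{\mu_D}^\theta(t)\le U_{L_D}^\theta(t)\).

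As a cross-check, one can also argue through \Cref{thm:auxiliary-training-ode}. The difference \(w:=z_{L_D}^\theta-z_{\mu_D}^\theta\) satisfies
\begin{equation}
w'=L_D w+(L_D-\mu_D)\,z_{\mu_D}^\theta,\qquad w(0)=0.
\end{equation}
The solution \(z_{\mu_D}^\theta\) is nonnegative, by its explicit formula. Variation of constants then gives \(w\ge0\). I would keep the direct argument as the main proof, since it avoids any comparison theorem.

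For the ``in particular'' statement, I would invoke \Cref{thm:osl-weaker-than-lip}, or in the linear case \Cref{rem:comparison-with-upper-bounds}, to see that the sharp constants satisfy \(\mu_D\le L_D\), with \(M(A)\le\|A\|_2\) in the linear case. This makes the hypothesis automatic. Because both regularizer choices (the grid average of \(z_c^\theta(t_i)\), or \(z_c^\theta(T)\)) are monotone in the pointwise values, the pointwise inequality passes directly to the regularizers.

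There is no real obstacle here. The only point needing care is nonnegativity: of \(\delta_\theta\), of the initial mismatch, and of the arguments \(t\) and \(t-s\). These signs are what make the comparison valid even when \(\mu_D\) is negative.
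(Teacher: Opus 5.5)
Your proof is correct and follows essentially the same argument as the paper: compare the two closed-form certificates term by term, using monotonicity of the exponential for the nonnegative arguments \(t\) and \(t-s\) and the nonnegativity of the initial mismatch and of \(\delta_\theta\). The auxiliary-ODE cross-check and the observation that the pointwise inequality passes to both regularizer choices are valid additions, but the main claim does not need them.
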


\begin{proof}
Since \(\mu_D\le L_D\), we have \(e^{\mu_D(t-s)}\le e^{L_D(t-s)}\) for \(0\le s\le t\) and \(e^{\mu_D t}\le e^{L_D t}\). Multiplying these inequalities by the nonnegative quantities \(\|x_0-\hat x_\theta(0)\|\) and \(\delta_\theta(s)\), and then integrating, gives the result.
\end{proof}

\begin{remark}[Practical workflow]
\label{rem:training-workflow}
The recommended workflow is two-stage. First train the PINN with the standard residual and initial-condition losses. Then continue training with \eqref{eq:stage2-loss}, preferably using the one-sided upper constant when it is available. The final lower and upper certificates are then recomputed after training on a separate certification grid.
\end{remark}

\section{Numerical Examples}
\label{Sec 6}

\subsection{Nonlinear radial-growth plus rotation system}
\label{sec:num-radial-rotation}

We first consider the nonlinear two-dimensional system
\begin{equation}
 \dot x(t)=f(x(t)),\qquad x(0)=x_0,
\end{equation}
where
\begin{equation}
 f(x)=\bigl(1+\alpha\|x\|_2^2\bigr)x+\beta Jx,\qquad
 J=\begin{pmatrix}0&-1\\ 1&0\end{pmatrix},\qquad x_0=(0.2,0)^\top,
\end{equation}
with
\begin{equation}
 \alpha=0.4,\qquad \beta=6,\qquad T=1.
\end{equation}
This example is useful because the rotational term changes the Euclidean Lipschitz constant but cancels from the symmetric part of the Jacobian. In polar coordinates,
\begin{equation}
 r'(t)=\bigl(1+\alpha r(t)^2\bigr)r(t),\qquad \theta'(t)=\beta,
\end{equation}
and hence
\begin{equation}
 r(t)=\bigl((r_0^{-2}+\alpha)e^{-2t}-\alpha\bigr)^{-1/2},\qquad
 x(t)=r(t)\begin{pmatrix}\cos(\beta t)\\ \sin(\beta t)\end{pmatrix}.
\end{equation}
The Jacobian is
\begin{equation}
 Df(x)=\bigl(1+\alpha\|x\|_2^2\bigr)I+2\alpha xx^\top+\beta J,
\end{equation}
and therefore
\begin{equation}
 \sym(Df(x))=\bigl(1+\alpha\|x\|_2^2\bigr)I+2\alpha xx^\top.
\end{equation}
On the ball \(B_R=\{x:\|x\|_2\le R\}\), the constant certified coefficients are
\begin{equation}
 \ell_{B_R}=1,\qquad \mu_{B_R}=1+3\alpha R^2,\qquad L_{B_R}\le 1+3\alpha R^2+|\beta|.
\end{equation}
The skew-symmetric part \(\beta J\) is responsible for the large Lipschitz constant but does not affect \(\ell_{B_R}\) and \(\mu_{B_R}\).

\begin{figure}[htbp]
    \centering
    \includegraphics[width=0.98\textwidth]{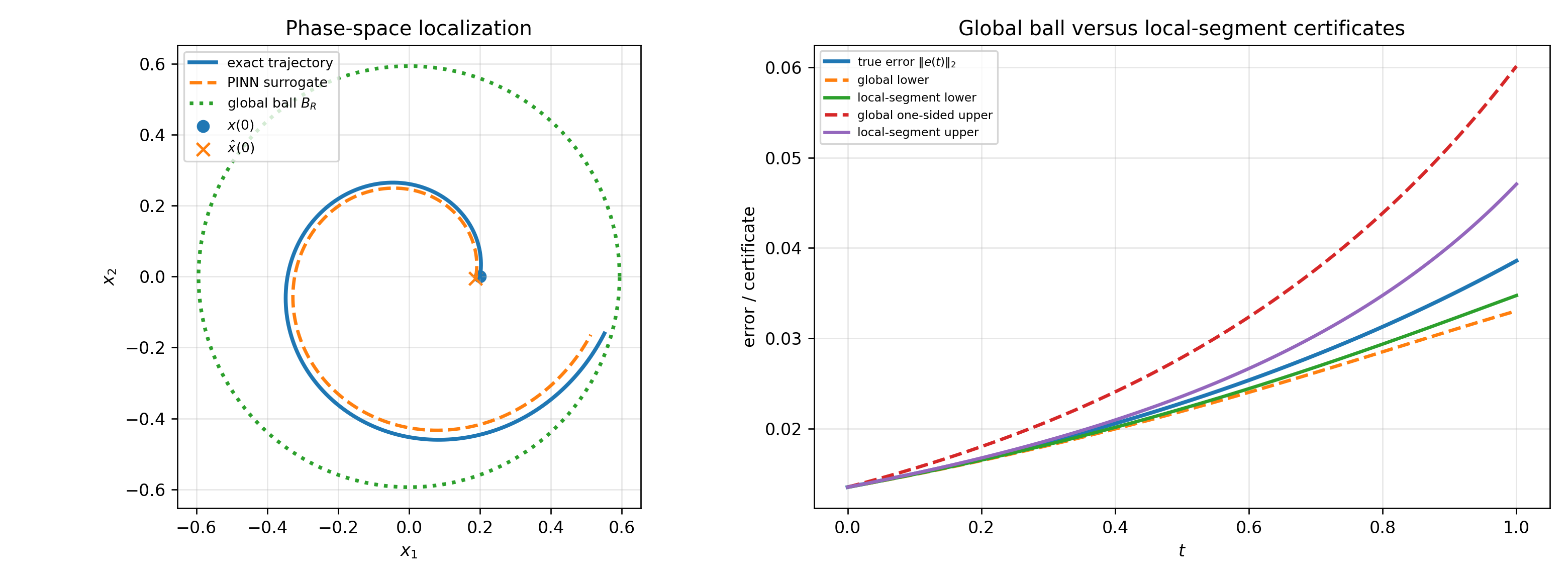}
    \caption{Nonlinear radial-growth plus rotation system. Left: exact and surrogate trajectories with the enclosing ball \(B_R\). Right: global ball-based lower and one-sided upper certificates compared with local-segment diagnostic curves and the true error. The local-segment curves are used only as a sharpness diagnostic in this synthetic example.}
    \label{fig:spiral-localization}
\end{figure}

\begin{figure}[htbp]
    \centering
    \includegraphics[width=0.98\textwidth]{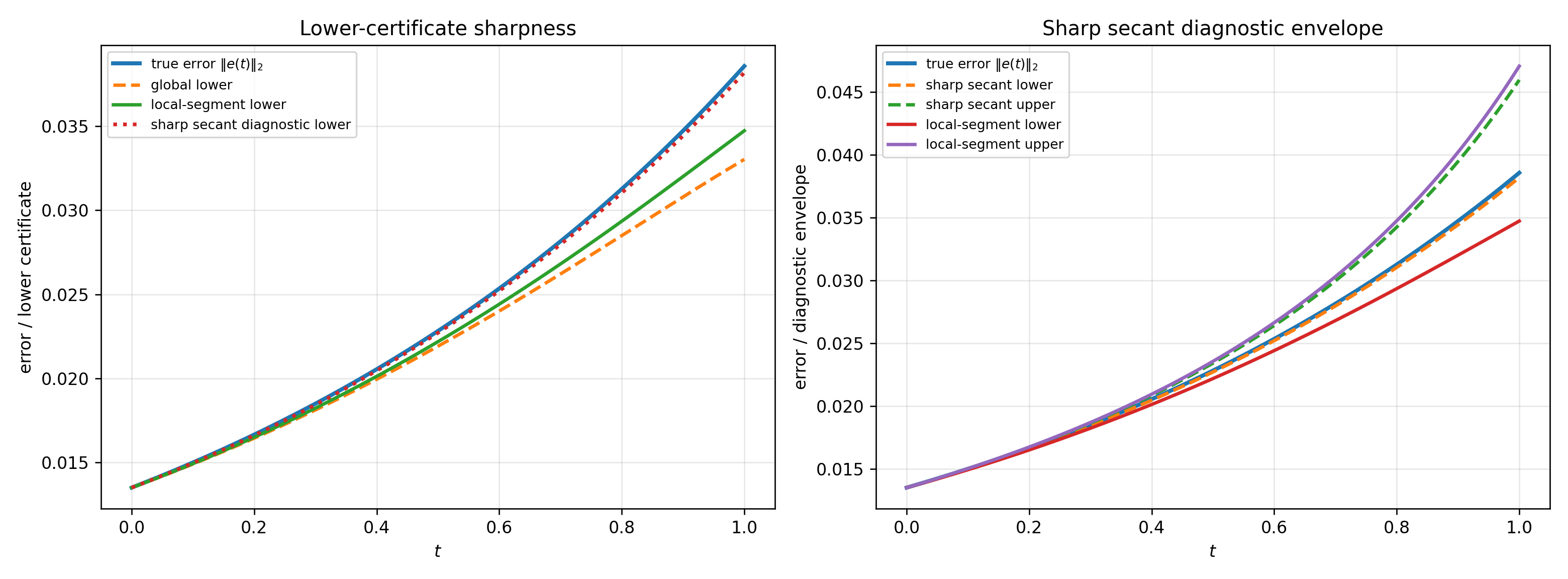}
    \caption{Diagnostic sharpness comparison for the nonlinear radial-growth plus rotation example. Left: lower-certificate sharpness over the full interval. Right: secant diagnostic envelope compared with the local-segment curves. The secant curves use the exact trajectory pair and are not used as black-box certificates.}
    \label{fig:spiral-diagnostics}
\end{figure}

\Cref{fig:spiral-localization} shows that both trajectories remain in the certified ball. At the final time, the true error is \(3.86\times10^{-2}\), the global-ball lower certificate is \(3.30\times10^{-2}\), the local-segment diagnostic lower curve is \(3.47\times10^{-2}\), the global one-sided upper certificate is \(6.01\times10^{-2}\), and the local-segment diagnostic upper curve is \(4.71\times10^{-2}\). The corresponding global Lipschitz upper certificate is valid but much larger, with final value about \(2.27\times10^{1}\), because it carries the rotational contribution \(|\beta|\). \Cref{fig:spiral-diagnostics} confirms that the segment and secant curves are useful for understanding sharpness, while the constant ball certificate remains the main computable certificate.

\begin{remark}[Constant examples and time-dependent extensions]
The reported certified constants in the main examples are constant on the certified domain. The local-segment and secant curves in \Cref{fig:spiral-localization,fig:spiral-diagnostics} are diagnostic comparisons for this synthetic problem. The general extension to time-dependent coefficients follows the integrating-factor remark in \Cref{Sec 2}.
\end{remark}

\FloatBarrier
\subsection{High-dimensional stiff linear ODE}
\label{sec:num-stiff}

The second example is a high-dimensional linear system, and it is also the place where the different linear constants are made explicit. We consider the stiff diagonal system in dimension \(d=96\):
\begin{equation}
 x_i'(t)=-\lambda_i x_i(t),
 \qquad
 \lambda_i=10^{4(i-1)/(d-1)},
 \qquad i=1,\ldots,d,
\label{eq:stiff-diagonal}
\end{equation}
with \(x_i(0)=(-1)^{i-1}/\sqrt d\) and \(T=1\). The stiffness ratio is \(10^4\). A full flow-map approximation on a box in \(\mathbb R^{96}\) would suffer from the curse of dimensionality, whereas the present trajectory-wise certificate only requires the trained path, residual evaluations, and matrix-vector operations.

For \(A=-\operatorname{diag}(\lambda_1,\ldots,\lambda_d)\),
\begin{equation}
 m(A)=-10^4,\qquad M(A)=-1,\qquad \|A\|_2=10^4.
\end{equation}
These values illustrate why positivity should not be imposed on one-sided constants. The system is dissipative, so the sharp one-sided constants are negative; this is valid for upper certification, but it means that no positive strong-monotonicity lower constant is available. To display all admissibility classes in one example, we also report non-sharp comparison constants
\begin{equation}
 \ell_{\rm adm}=-1.2\times10^4\le m(A),\quad
 \mu_{\rm adm}=0\ge M(A),\quad
 L_{\rm adm}=1.2\times10^4\ge\|A\|_2.
\end{equation}
Thus the six displayed numbers are two lower one-sided coefficients, two upper one-sided coefficients, and two Lipschitz constants; only \(m(A)\), \(M(A)\), and \(\|A\|_2\) are sharp.

The scalar lower certificate is not useful here because the system is dissipative and the hard-constrained ansatz has \(e(0)=0\). The signed-residual finite-probe certificate from \Cref{thm:signed-residual-probe} remains nontrivial. We use only the coordinate unit vectors \(P=\{e_1,\ldots,e_d\}\); hence the reported lower curve is \(\max_i |e_i(t)|\) reconstructed from the signed residual identity, not the full norm \(\|e(t)\|_2\).

\begin{figure}[htbp]
    \centering
    \includegraphics[width=0.85\textwidth]{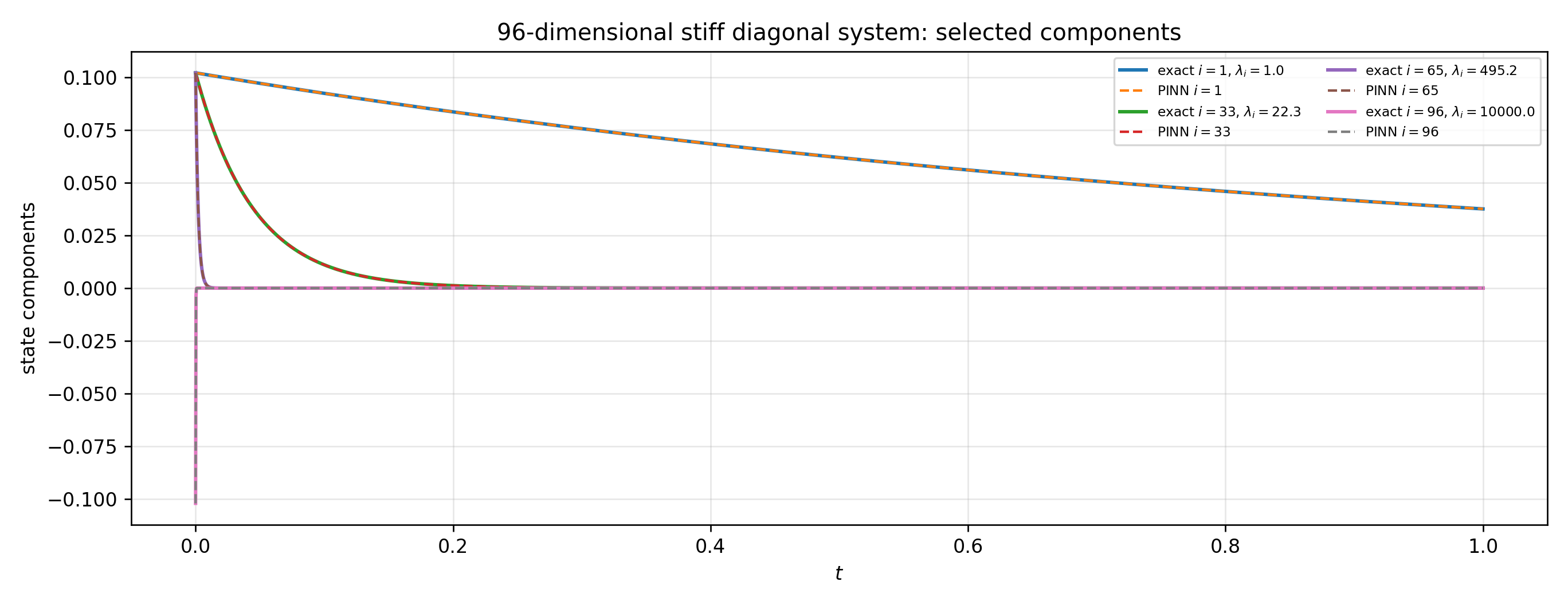}
    \caption{Selected components of the 96-dimensional stiff system. The surrogate resolves both slow and fast modes.}
    \label{fig:stiff-components}
\end{figure}

\begin{table}[htbp]
\centering
\begin{tabular}{lll}
\hline
\textbf{Class} & \textbf{Constant} & \textbf{Value}\\
\hline
One-sided (lower) & $m(A)$ & $-10^{4}$\\
One-sided (lower, admissible) & $\ell_{\rm adm}$ & $-1.2\times 10^{4}$\\
One-sided (upper) & $M(A)$ & $-1$\\
One-sided (upper, admissible) & $\mu_{\rm adm}$ & $0$\\
Lipschitz & $\|A\|_{2}$ & $10^{4}$\\
Lipschitz (admissible) & $L_{\rm adm}$ & $1.2\times 10^{4}$\\
\hline
\end{tabular}
\caption{Six constants for the stiff linear system. The one-sided constants may be negative; the Lipschitz constants are nonnegative.}
\label{tab:stiff-constants}
\end{table}

\begin{figure}[htbp]
    \centering
    \includegraphics[width=0.85\textwidth]{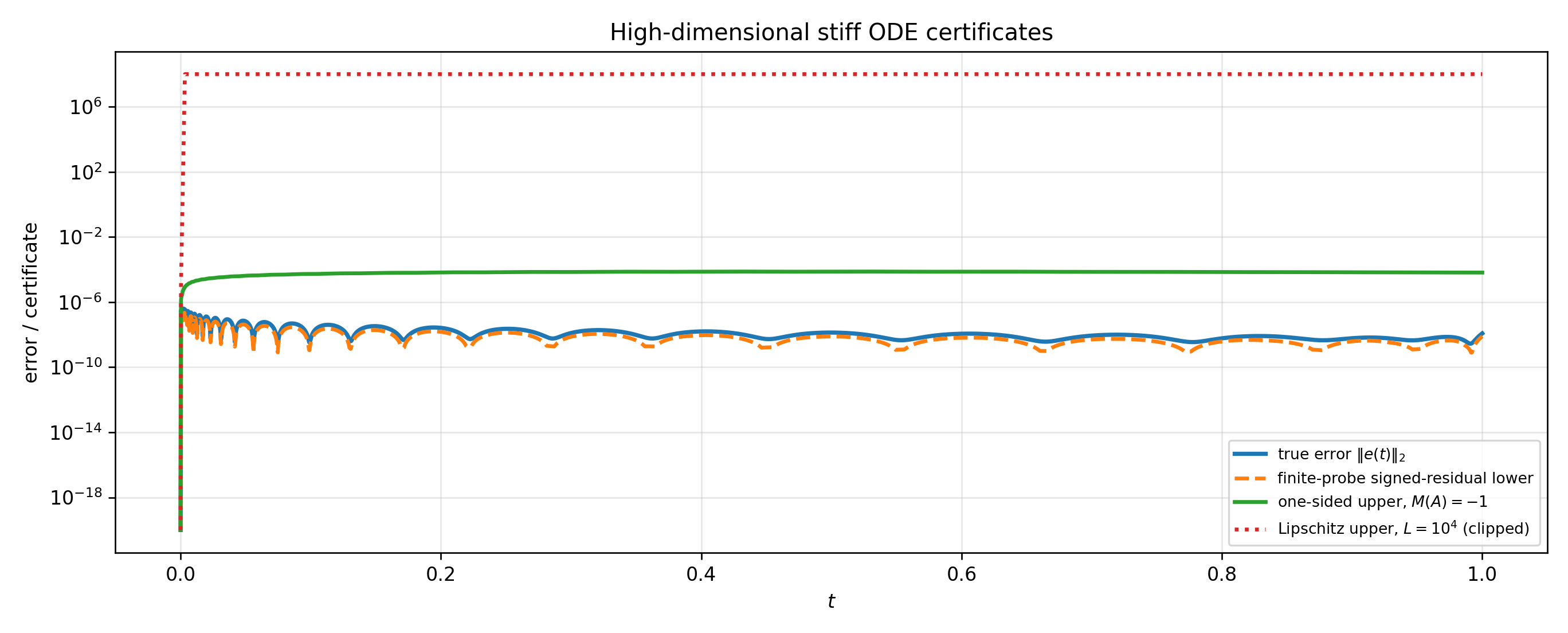}
    \caption{Certificates for the 96-dimensional stiff system. The coordinate-unit finite-probe signed-residual lower certificate is informative but does not coincide with the exact error norm. The one-sided upper certificate based on \(M(A)=-1\) remains finite, whereas the Lipschitz certificate based on \(\|A\|_2=10^4\) is clipped in the plot.}
    \label{fig:stiff-certificates}
\end{figure}

\Cref{fig:stiff-components,tab:stiff-constants,fig:stiff-certificates} show the effect of stiffness on certification. The maximum true error over the grid is \(4.01\times10^{-7}\). At the final time, the true error is \(1.19\times10^{-8}\), while the finite-probe lower certificate is \(7.94\times10^{-9}\), about \(66.5\%\) of the true error. The one-sided upper certificate is \(6.51\times10^{-5}\), whereas the Lipschitz upper certificate overflows the useful plotting scale. This example is the main use of the signed-residual finite-probe lower repair in high dimension.

\FloatBarrier
\subsection{Certificate-informed training on an unstable oscillatory system}
\label{sec:num-training}

The third example uses the certificate-informed training formulation from \Cref{Sec 5}. We consider
\begin{equation}
 x'(t)=Ax(t)+g(t),\qquad
 A=\begin{pmatrix}0.35&-20\\20&0.35\end{pmatrix},\qquad x(0)=0,
\label{eq:training-oscillation}
\end{equation}
with a smooth forcing \(g(t)\). The sharp upper one-sided constant and the Euclidean Lipschitz constant are
\begin{equation}
 M(A)=0.35,\qquad \|A\|_2=20.003.
\end{equation}
This is an unstable oscillatory system: the symmetric part gives the true amplitude growth, whereas the large skew-symmetric part makes the Lipschitz upper constant much larger. This is therefore a more informative test for upper-certificate training than a scalar ODE.

The PINN is hard-constrained at the initial condition. Stage one uses the standard residual loss. Stage two continues from the same checkpoint with three branches: ordinary PINN continuation, one-sided upper-certificate training based on \(U_{M(A)}\), and Lipschitz upper-certificate training based on \(U_{\|A\|_2}\). The auxiliary certificate state is recomputed from the current residual at each training epoch, as described in \Cref{Sec 5}. The lower certificate is not optimized; it is only reported after training as a finite-probe diagnostic.

\begin{figure}[htbp]
    \centering
    \includegraphics[width=0.95\textwidth]{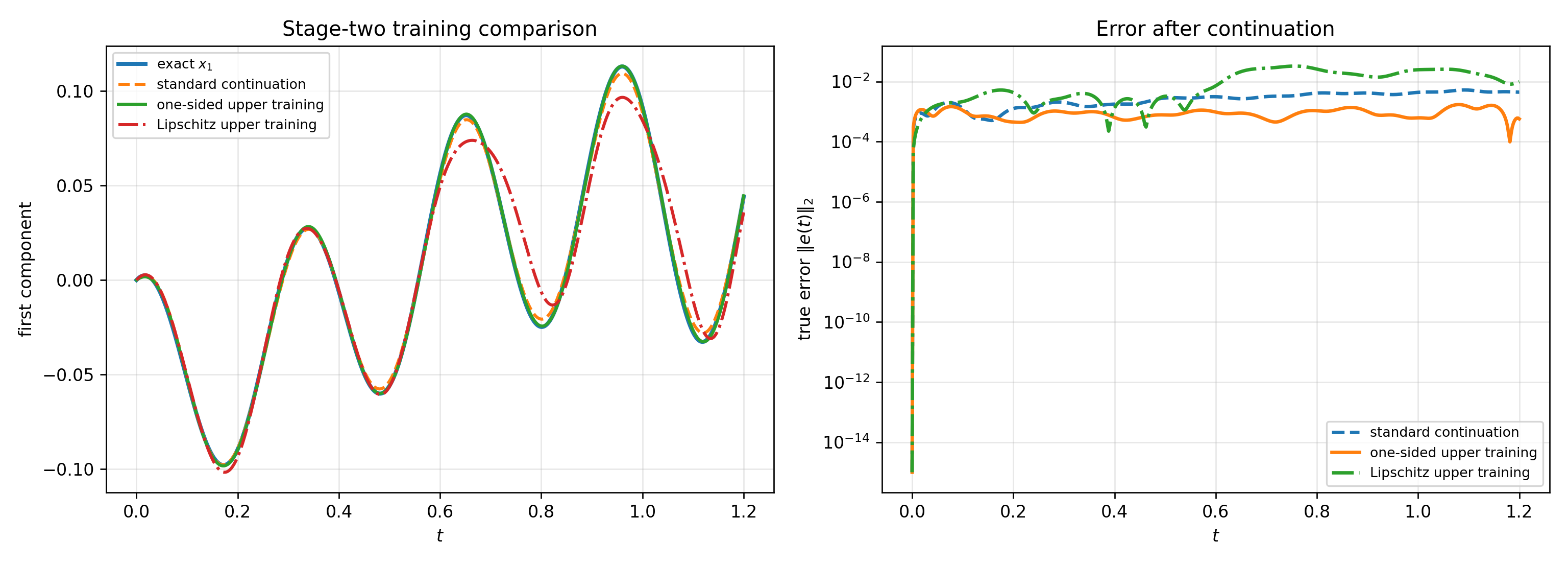}
    \caption{Certificate-informed training on the unstable oscillatory system. The one-sided upper-certificate branch tracks the exact solution more accurately and has the smallest true error after continuation.}
    \label{fig:training-solution-error}
\end{figure}

\begin{figure}[htbp]
    \centering
    \includegraphics[width=0.95\textwidth]{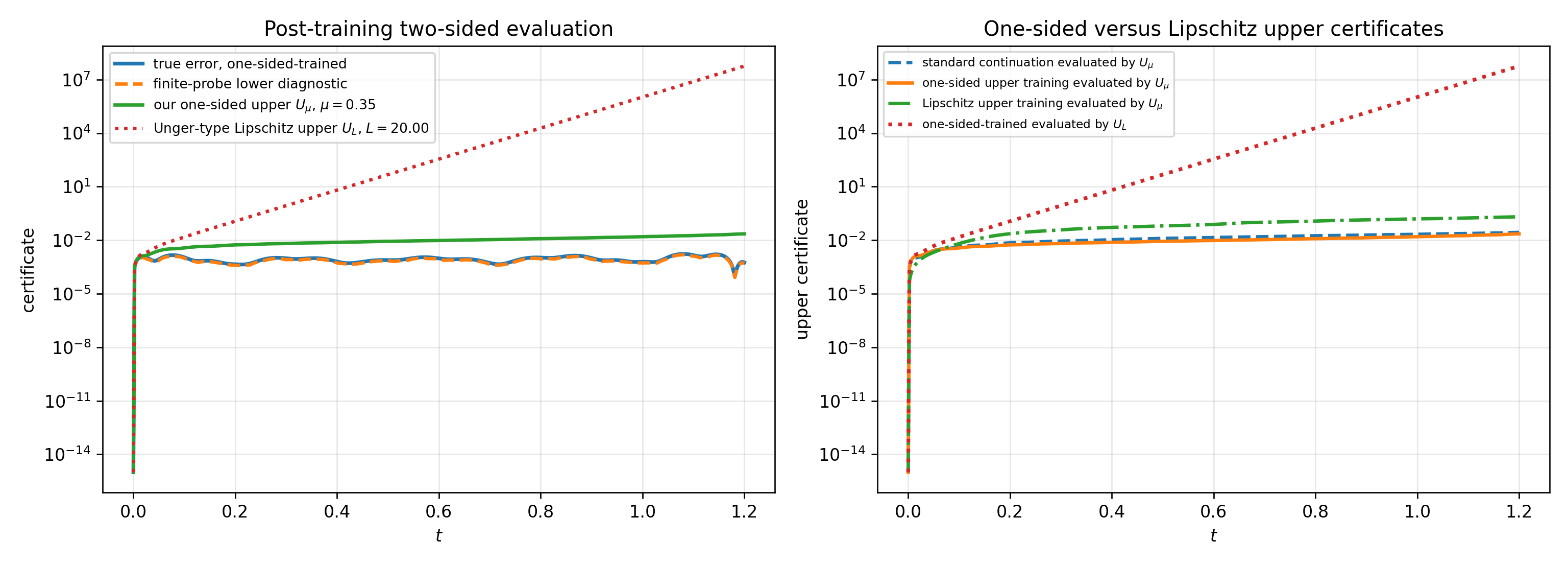}
    \caption{Post-training certificate evaluation for the unstable oscillatory system. Left: true error, finite-probe lower diagnostic, one-sided upper certificate, and Lipschitz-based upper certificate for the one-sided trained model. Right: comparison of propagated upper certificates across the three branches.}
    \label{fig:training-certificates}
\end{figure}

\begin{figure}[htbp]
    \centering
    \includegraphics[width=0.72\textwidth]{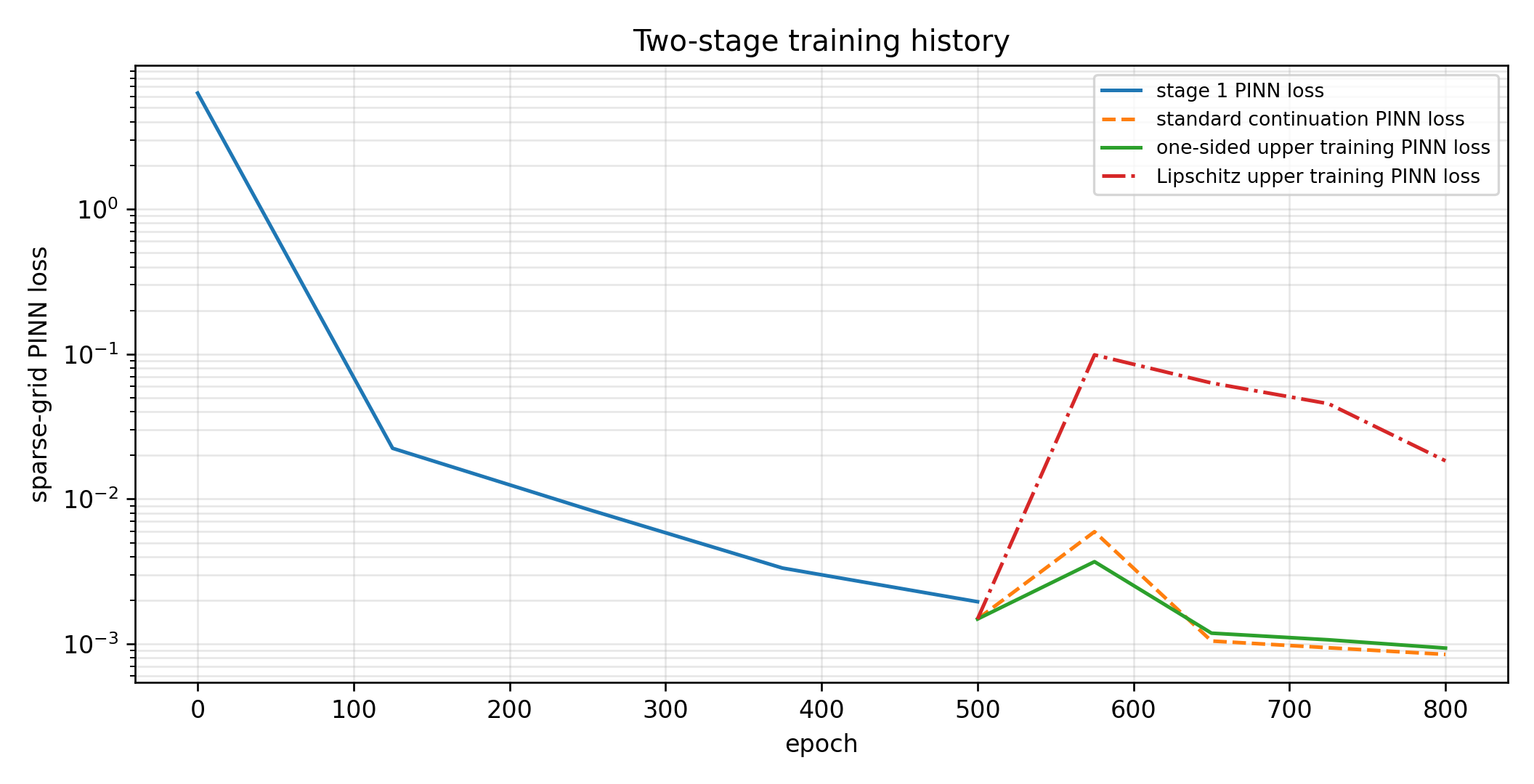}
    \caption{Two-stage training history. Stage one uses the standard PINN loss. Stage two compares ordinary continuation with the two upper-certificate regularizers.}
    \label{fig:training-history}
\end{figure}

\Cref{fig:training-solution-error,fig:training-certificates,fig:training-history} show that the one-sided upper-certificate branch is the most effective of the three continuations. Its final error is \(5.68\times10^{-4}\), compared with \(4.43\times10^{-3}\) for the standard continuation and \(9.79\times10^{-3}\) for the Lipschitz-refined branch. For the one-sided trained model, the final one-sided upper certificate is \(2.26\times10^{-2}\), whereas the final Lipschitz-based upper certificate is about \(5.70\times10^7\). This supports the practical recommendation to use the one-sided upper certificate for training whenever a reliable one-sided constant is available.

\FloatBarrier
\section{Discussion and Conclusion}
\label{Sec 7}

We developed a computable two-sided \emph{a posteriori} certification framework for PINN approximations of ordinary differential equations. In contrast to certification results based only on global Lipschitz continuity, which provide upper error bounds, the present approach gives both lower and upper certificates under local one-sided growth assumptions. In the nonlinear setting, the bounds are governed by admissible lower and upper one-sided constants on a certified state-space domain. In the linear autonomous case, these constants reduce to the sharp symmetric-part quantities
\begin{equation}
m(A)=\lambda_{\min}\!\left(\frac{A+A^\top}{2}\right),
\quad
M(A)=\lambda_{\max}\!\left(\frac{A+A^\top}{2}\right).
\end{equation}

The simultaneous availability of lower and upper bounds gives a geometric localization of the exact solution around the PINN prediction. Namely, if
\begin{equation}
\underline E(t)\le \|x(t)-\hat x(t)\|\le \overline E(t),
\end{equation}
then the exact state belongs to
\begin{equation}
\mathcal X_{\mathrm{cert}}(t)
=
\left\{
z\in\mathbb{R}^n:
\underline E(t)\le \|z-\hat x(t)\|\le \overline E(t)
\right\}.
\end{equation}
Thus the certificate defines a union of intervals in one dimension, an annulus in two dimensions, a spherical shell in three dimensions, and a closed annular region in general dimension. The band width $\overline E(t)-\underline E(t)$ therefore quantifies the sharpness of the localization.

The numerical examples illustrate these conclusions in three regimes. The nonlinear radial-growth plus rotation example shows that localization and one-sided constants are informative in a genuinely nonlinear multidimensional system; the rotational term enlarges the Lipschitz constant without affecting the symmetric-part growth. The high-dimensional stiff linear example displays the six admissible linear constants and shows that a finite-probe signed-residual lower certificate can remain informative when the scalar lower certificate is zero. The unstable oscillatory training example shows how the upper certificate can be used as a refinement term, with the lower certificate retained as a post-training diagnostic.

\subsection*{Limitations and admissible model classes}

The framework is not intended to be universal. Its usefulness depends on the availability of suitable local one-sided growth information and a certified domain containing the exact and PINN trajectories. The scalar lower certificate is most informative when a positive lower one-sided coefficient \(\ell_D>0\) can be verified and when the transported initial mismatch dominates the residual accumulation. This condition is natural in local expansive or positive-feedback regimes, for example in early-growth epidemic linearizations, low-incidence cooperative models, and autocatalytic reaction networks on domains where the symmetric part of the Jacobian has a positive lower bound. Dissipative systems, including the stiff linear example, typically have \(\mu_D<0\) and may have \(\ell_D<0\); then the upper one-sided certificate can be very useful, but the scalar lower certificate should not be expected to be positive.

If the scalar lower condition fails, a zero scalar lower value is unavoidable from scalar residual-majorant information alone. The only repair mechanism used in the numerical section is the signed-residual finite-probe certificate for linear inhomogeneous systems. For nonlinear hard-constrained PINNs, nontrivial lower repair would require additional assumptions or observables, such as adjoint-remainder bounds or trusted interior data, and is left for future work. Conservative or symplectic systems, Hamiltonian dynamics, and non-monotone interactions such as classical Lotka--Volterra models may therefore require problem-specific lower-certificate mechanisms \cite{marsdenratiu1999,nutku1990hamiltonian}. Finally, conservative estimates of the local constants, the residual majorant, or the RK4 remainder constants widen the certified band.

Overall, the results show that PINN solutions of ODEs can be equipped with rigorous, computable error enclosures rather than only point predictions. Future work will focus on sharper local constants, less conservative residual majorants, and extensions to broader nonlinear systems and PDE-based PINNs.

\bibliographystyle{elsarticle-num}
\bibliography{references}

\end{document}